\documentclass{article}
\usepackage{graphicx} 
\usepackage{geometry}
\usepackage{amsmath}
\usepackage{amsfonts}
\usepackage{amsthm}
\usepackage{mathtools}
\usepackage{ulem}
\usepackage{amssymb}
\usepackage{dsfont}
\usepackage[dvipsnames]{xcolor}
\usepackage{amscd}
\usepackage{booktabs}
\usepackage[numbers]{natbib}
\PassOptionsToPackage{hyphens}{url}\usepackage{hyperref}
\usepackage{comment}
\hypersetup{
	colorlinks,
	linkcolor={red!50!black},
	citecolor={blue!50!black},
	urlcolor={blue!80!black},
    breaklinks=true
}
\usepackage{appendix}
\usepackage[capitalize,nameinlink,noabbrev]{cleveref}
\crefname{equation}{}{}

\usepackage{tikz-cd}
\usepackage{tikz}
\usepackage{indentfirst}
\usetikzlibrary{arrows.meta,
            decorations.markings,cd}
\usepackage{pgfplots}
\usepackage{enumitem}
\usepackage{subcaption}
\usepackage[abs]{overpic}
\usepackage{float}
\usepackage{natbib}
\usepackage{newtxtext}
\usepackage{newtxmath}
\usepackage{placeins}
\usepackage{todonotes}
\usepackage{algorithm}
\usepackage{algpseudocode}
\AddToHook{cmd/appendix/before}{%
  \crefalias{section}{appendix}%
  \crefalias{subsection}{appendix}%
}

\makeatletter
\AtBeginDocument{%
  \expandafter\renewcommand\expandafter\subsection\expandafter
    {\expandafter\@fb@secFB\subsection}%
  \newcommand\@fb@secFB{\FloatBarrier
    \gdef\@fb@afterHHook{\@fb@topbarrier \gdef\@fb@afterHHook{}}}%
  \g@addto@macro\@afterheading{\@fb@afterHHook}%
  \gdef\@fb@afterHHook{}%
}
\makeatother

\DeclareMathOperator{\GW}{GW}
\DeclareMathOperator{\Cpl}{Cpl}
\newcommand{\lb}{\llbracket}
\newcommand{\rb}{\rrbracket}
\newcommand{\Proj}{{\textrm{Proj}}}
\newcommand{\cP}{{\mathcal{P}}}
\newcommand{\hyperomegaX}{{\omega_{X, X'}}}
\newcommand{\hyperomegaY}{{\omega_{Y, Y'}}}

\newtheorem{Thm}{Theorem}[section]
\newtheorem{Lem}{Lemma}
\theoremstyle{definition}
\newtheorem{Def}{Definition}[section]

\newtheorem{remark}{Remark}[section]
\newtheorem{example}{Example}[section]

\title{A Unified Geometric Framework for Developmental Analysis of Spatial Transcriptomic Data }
\usepackage{authblk}

\author[1]{Mary Chriselda Antony Oliver\thanks{These authors contributed equally to this work.}}
\author[2]{Kaitlyn Hohmeier$^{*}$}
\author[3]{Tuyen Tran$^{*}$}
\author[4]{Alejandra Castillo}
\author[2]{Caroline Moosm\"{u}ller}
\author[5]{Shiying Li\thanks{Corresponding author: sli82@unl.edu}}

\affil[1]{Department of Applied Mathematics and Theoretical Physics, University of Cambridge}
\affil[2]{Department of Mathematics, University of North Carolina at Chapel Hill}
\affil[3]{Department of Mathematics and Statistics, Loyola University Chicago}
\affil[4]{Department of Mathematics, Pomona College}
\affil[5]{Department of Mathematics, University of Nebraska-Lincoln}

\date{}

\begin{document}

\maketitle

\begin{abstract}
High-throughput single-cell and spatial transcriptomic technologies provide high-resolution snapshots of heterogeneous cellular states, but their destructive nature prevents repeated measurements of the same cells over time. Consequently, temporal and spatial dynamics must be inferred from independently sampled, unaligned cell populations, making it challenging to reconstruct developmental trajectories.
Optimal transport (OT) offers a geometric framework for aligning cell populations and inferring developmental trajectories, but many existing approaches focus on modeling the evolution of distributions of cells in gene expression space rather than the relational structure encoded by gene expression networks. To address this limitation, we introduce a geometric framework for analyzing the spatiotemporal evolution of gene expression networks through embeddings in Gromov--Wasserstein (GW) space. By representing each developmental stage as a graph combining gene expression and spatial proximity, our approach enables comparisons of network structure across time, continuous interpolation between developmental stages via GW geodesics, and quantification of network-level changes using Ollivier-Ricci curvature. We evaluate our framework on a spatiotemporal transcriptomic \textit{Drosophila} dataset and show that GW geodesic interpolations reproduce main trends in curvature dynamics observed in empirical gene expression networks. Agreement with higher-order Co-Optimal Transport (COOT) distances, which jointly represent spatial and temporal information, further validates the framework and  suggests that hypernetwork representations successfully record salient biological changes across time. In general, our approach provides a unified geometric approach to study dynamically evolving biological networks.

\textit{Keywords:} Gromov--Wasserstein geometry, gene co-expression networks, Ricci curvature, spatiotemporal transcriptomics

\textit{MSC:} 49Q22, 05C82, 92C42
\end{abstract}


\section{Introduction}
Modern single-cell genomic technologies provide pathways for multimodal profiling of cells in temporal and spatial dimensions. However, due to experimental limitations, the temporal dynamics of these cells are not often fully exploited to get biological insights \cite{klein2025mapping}. Furthermore, knowledge about cell types and their interactions, such as how they naturally coexist and evolve in healthy or diseased states, remains a challenge for the omics data science community \cite{lahnemann2020eleven,palla2022spatial,tanay2017scaling,cang2020inferring}. From a computational perspective, a central challenge is to develop methods that accurately represent, compare, and integrate temporal and spatial dependencies across different biological scales, conditions, and sample types \cite{velten2023principles}.

Spatial transcriptomic datasets usually arise from high-throughput omics technologies that measure molecular profiles across space or time, providing a detailed snapshot of the molecular state of the biological system \cite{velten2023principles}. Such measurements across space or over time, whether within a single omics layer or across multiple layers, reveal changes in molecular states and biological features that are invisible in static measurements. Spatial profiling has been particularly useful for reconstructing how tissues are organized during development \cite{larsson2021spatially,lewis2021spatial,palla2022spatial,seferbekova2023spatial}, for informing cost-effectiveness analyses in disease elimination strategies \cite{antony2024reducing,smith2026health,oliver2025uncertainty}, and for capturing variation within structures such as tumors, which can improve patient stratification \cite{rao2021exploring,mathews18}. In a similar way, time-series measurements, which are usually not independently and identically distributed, explain how molecular processes change over time, such as during development or in response to perturbations. Methods involving time-series data have included using optimal transport in conjunction with bandit approaches, as studied in \cite{oliver2026learning}. A benefit to approaching biological data from this perspective is the identification of temporal gene expression patterns linked to disease \cite{bar2012studying,cardoso2020developmental}.

However, approaches that involve only spatial or only temporal components of biological data have important limitations. For example, observations are often spatially or temporally correlated, as nearby samples in space or time tend to exhibit similar molecular behavior due to underlying biological relationships. In addition, spatial locations and time points usually need to be aligned or normalized before comparisons can be made between individuals or biological scales. To address these challenges, several methods have been designed to incorporate spatial and temporal structure in biological analyses. Some of the popular methods in the literature include regression-based modeling; parametric-based methods such as logistic growth models \cite{creswell2020high} and ODEs and PDEs \cite{karlebach2008modelling,maini2012turing}; and non-parametric-based methods such as splines \cite{van2020trajectory,Banerjee2024trajectory} and Gaussian processes \cite{Williams_1996}. To fully exploit the geometric nature of the underlying datasets, alternatives to functional dependent methods include graph-based modeling \cite{zhu2018identification,poury19}, Markov random fields (MRFs) \cite{elman1990finding}, Laplacian-based methods \cite{oliver2025laplace,calder2020poisson,shi2017weighted,nadler2009semi}, and graph neural networks \cite{zhou2020graph,hu2021spagcn}. Other existing methods outside of model-based approaches include the use of summary statistics to account for temporal and spatial proximity,  such as temporal or spatial autocorrelations using aggregated summary statistics to infer trajectory \cite{ghazanfar2020investigating}. Ultimately, the choice of the model depends on the following factors: computational scalability, model interpretability, and the biological question of interest.

In this paper, we adopt a geometric perspective based on
optimal transport (OT) theory \cite{Villani_Old} to understand the underlying dynamics of spatial-temporal models. Optimal transport provides a natural mathematical framework for comparing complex biological systems while respecting their intrinsic geometry. For a comprehensive overview of how optimal transport methods have been implemented in biological applications, see \cite{bunne2024optimal} for more details. 
Our two main tools are the Gromov-Wasserstein (GW) distance \cite{gwdist}, which compares objects that lie in distinct metric measures spaces, and Ollivier-Ricci (OR) curvature \cite{Ricci_curvature_Ollivier2009}, a discrete formulation of classical Ricci curvature for graphs and metric spaces.

Using GW distances, one can compute distances between graphs that have different numbers of nodes, making GW particularly well suited for comparing gene co-expression networks across developmental stages where the sizes of the network may vary.
The application of the GW distance in biological contexts also has precedent; for example, \cite{biologygw} implemented a GW-like distance, called the $\GW_\tau$ distance, between biological time series, which fixes the distance functions of each trajectory relative to the starting point of each trajectory, reducing the computational complexity of the GW optimization problem. This work further incorporated the use of fused Gromov-Wasserstein barycenters as a tool to average replicate trajectories, with the barycenter functioning as a single representative curve. In cell biology, \cite{doi:10.1089/cmb.2021.0446} proposed a method called single cell alignment with optimal transport (SCOT), a fully unsupervised algorithm that utilizes GW optimal transport to align single-cell multi-omics data by viewing cells as static point clouds and then aligning across different sequencing modalities.  However, despite its geometric versatility, the main disadvantage of the GW distance is that it is very computationally expensive, since the optimization problem is non-convex and in general is an NP-hard quadratic programming problem \cite{gwdist}.

In the context of single-cell dynamics, Ricci curvature appears through its synthetic formulation in the Wasserstein space via the Lott–Sturm–Villani theory, where curvature controls how probability measures evolve along geodesics in the space of cell-state distributions \cite{lott2009ricci}. 
Our interest in the combination of Ricci curvature with biological network data is motivated by the observation that curvature provides an interpretable geometric measure of network robustness and organization.
In particular, \cite{Graph_Curvature_Sandhu2015} combined OR curvature with gene expression networks, viewed as graphs, to identify significant differences in curvature between healthy and cancerous cellular networks. A key insight from \cite{Graph_Curvature_Sandhu2015} was that cancerous cellular networks exhibited statistically significant higher curvature than normal, healthy cellular networks. Other work in \cite{baptista2024} studied Ricci curvature and Ricci flows in the context of biological network rewiring and cellular differentiation. Notably, this work incorporated a temporal component: Ricci flow trajectories between initial and terminal stages of cellular differentiation gene expression datasets obtained accurate temporal ordering for intermediate time steps. These results align well with two of our contributions, which involve using GW geodesics and prototype curvature distributions at temporal stages as methods to order and classify gene expression networks at different spatial locations.

An extension of OR curvature known as \textit{dynamical Ollivier-Ricci curvature} was introduced by \cite{Unfolding_the_m_Gosztolai2021}, which measures how much two diffusion processes starting at neighboring nodes overlap after spreading for a pseudotime $\tau$, allowing edge curvatures to vary with $\tau$ rather than remaining in a fixed embedding space.

Intuitively, positive Ricci curvature corresponds to the contraction of mass along optimal transport paths, meaning that populations of cells (viewed as distributions over gene expression states) tend to mix and concentrate rather than disperse \cite{huynh2024topological}. In single-cell RNA-seq modeling \cite{baptista2024}, this translates into stability of developmental trajectories and robustness of differentiation pathways under perturbations. Conversely, non-positive curvature allows branching and divergence of trajectories, reflecting heterogeneous fate decisions. This viewpoint becomes particularly powerful when interpreting cell dynamics as gradient flows and PDEs of free energy functionals (such as JKO flows) ~\cite{bunne2022proximal,tong2020trajectorynet}.

Modern formulations of data alignment problems that arise in single-cell dynamics often seek to jointly infer correspondences between multiple datasets or modalities while respecting both local geometry and global consistency constraints. 
Single-cell data integration increasingly requires the simultaneous alignment of multiple datasets, developmental stages, or modalities while preserving both local geometry and global biological structure.
This naturally leads to extensions of GW distance beyond pairwise matching, such as co-optimal transport, which aligns both features and samples in a joint framework \cite{Redko_2020}. In this setting, one simultaneously optimizes a family of transport plans that are coupled through shared latent structure, rather than solving independent pairwise problems. This perspective is particularly powerful when data is represented as multi-level hypergraphs, where nodes encode cell states and higher-order hyperedges encode interactions, lineage relations, or shared regulatory structure \cite{chowdhury2024hypergraph}. 

Together, our work presents a unified geometric framework for analyzing spatial transcriptomic data across multiple scales. By combining graph curvature, Gromov--Wasserstein geometry, and co-optimal transport, we provide complementary tools for developmental stage characterization, trajectory analysis, and higher-order network comparison.

\subsection{Contributions}

The main contributions of this work are as follows:
\begin{enumerate}
    \item We develop a curvature-based framework for network-structured data with associated spatial and temporal information, showing that edgewise OR curvature distributions can provide discriminative signatures for comparing and classifying networks across temporal stages. We further evaluate the framework using \textit{Drosophila} (also known as the ``fruit fly'') spatial transcriptomics dataset \cite{wang2025drosophila} and quantify its   effectiveness in analyzing developmental changes 
     through OR curvatures of stage-wise gene co-expression networks. These results  provide additional empirical evidence, alongside \cite{Graph_Curvature_Sandhu2015} and \cite{baptista2024}, that OR curvature can serve as  an effective feature for differentiating biological states.

    \item We propose a trajectory analysis of biological network data based on Gromov-Wasserstein (GW) geodesics. While prior work \cite{baptista2024} incorporated Ricci flow trajectories to study cellular differentiation between initial and terminal stages of development, to the best of our knowledge this is the first application of GW geodesics to biological developmental trajectories. Our analysis shows that GW geodesic interpolation describes developmental progression between initial and terminal stages, with the curvature distributions of intermediate geodesics closely matching those of observed intermediate stages. Furthermore, curvature distributions along the GW trajectory accurately recover the temporal ordering of real data, demonstrating that GW geodesics provide an effective framework for reconstructing temporal changes in gene expression networks, even when intermediate observations are unavailable. These results correspond with \cite{baptista2024} using Ricci flow trajectories. Together, these findings further support the conclusion that Ollivier-Ricci curvature distributions encode biologically meaningful information about the evolving structure of gene expression networks over time.

    \item We present a hypernetwork representation of the developmental stages of gene expression network data to incorporate multilayer spatial and temporal information combined with co-optimal transport (COOT). This analysis reveals that higher-order representations reveal both the spatial and temporal organization of \textit{Drosophila} development. By quantifying similarities between developmental stages using co-optimal transport, we recover the expected sequence of embryonic and early larval development as described in \cite{wang2025drosophila}. 
    These findings demonstrate that hypernetwork representations preserve biologically meaningful developmental structure across multiple scales and, combined with COOT, provide a framework for characterizing developmental trajectories and 
    major developmental transitions. 
\end{enumerate}

\subsection{Outline}

Background and preliminary material on GW geodesics, co-optimal transport distance, Ricci curvature, and related constructions are provided in \Cref{sec:background}. The main methodology is presented in \Cref{sec:method}.  In particular, the curvature-based classification of developmental stages is described in \Cref{subsubsec:realcurvaturecomp}; GW-geodesic-based curvature trend analysis and temporal rank ordering  are presented in \Cref{subsection: trend analysis with OR curvature,sec:geodesicclassification}; the dynamic OR curvature analysis is presented in \Cref{sec:dynamic-curvature-distributions}; and the hypernetwork-based co-optimal transport analysis is described in \Cref{sec:coot_development_trajectory}. The corresponding applications and results for the \textit{Drosophila} dataset are presented in \Cref{sec:application}. A summary of the results and a discussion of future work are provided in \Cref{sec:discussion}.

\section{Background}\label{sec:background}

Gene co-expression networks~\cite{weirauch2011gene} encode the strength of pairwise associations among genes across tissue samples collected at different developmental stages or spatial locations. In this work, we first represent such spatiotemporal gene-expression data as finite graphs and, more generally, as finite metric measure spaces obtained by equipping these graphs with a distance and a node measure. To incorporate spatial resolution more explicitly, we also use hypernetworks ~\cite{chowdhury2024hypergraph} to encode higher-order relationships among genes or spatial slices. 

At the graph level, one common construction is to measure the association between each pair of genes using the correlation of their measured expression across samples (see ~\cite{horvath2008geometric} for more details). A large correlation, in magnitude or after a chosen similarity transformation, indicates a strong pairwise relationship between two genes. Genes are treated as nodes, and an edge is placed between two genes when their association exceeds a prescribed threshold. Conversely, pairs with weak association are not connected. Since correlations may be negative, one may either retain signed edge weights or transform the correlations into nonnegative similarity scores following \cite{Graph_Curvature_Sandhu2015}, depending on the modeling goal. After thresholding, the resulting adjacency matrix defines a graph representation of the gene-expression network, in which nodes correspond to genes and edges encode sufficiently strong pairwise co-expression relationships.

\subsection{Gromov-Wasserstein distance and geodesics}
 We recall that a Polish metric space $(X,d)$ is a complete, separable metric space. The set of Borel probability measures on $X$ is denoted as $\cP(X)$.  A metric measure space $\mathcal{X}:=(X, d_X, \mu_X)$ is a compact metric space $(X, d_X)$ endowed with a probability measure $\mu_X$. We first recap some preliminary notions of distance between probability measures in the same and distinct metric measure spaces, as summarized below.

\begin{Def} [$p$-Wasserstein distance, \cite{Villani_Old}] \label{definition: Wasserstein Distance Abstract}
	Let \((X, d)\) be a Polish metric space, and let \(p \in [1, \infty)\). For any two probability measures \(\mu, \nu\) in \(\mathcal{P}(X)\), the Wasserstein distance of order \(p\) between  \(\mu\) and \(\nu\) is defined by
	\[
		W_{p}(\mu,\nu) = \left( \inf_{\pi \in \Cpl(\mu,\nu) } \int_{X \times X} d(x,y)^{p}\textrm{d}\pi(x,y) \right)^{1/p},
	\]
     where $\Cpl(\mu, \nu)$ is the set of coupling, i.e.,  probability measures on $X \times X$ with marginals $\mu$ and $\nu$ and $p \in [1, \infty)$.
\end{Def}

\begin{remark}[Coupling and Transport Map, \cite{Villani_Old}]
A coupling can be defined generally for probability measures $\mu_X \in \mathcal{P}(X)$ and $\mu_Y \in \mathcal{P}(Y)$ on different spaces $X$ and $Y$. To standardize notation, we introduce it using the pushforward operation. 

\begin{equation*}
    \Cpl(\mu_X, \mu_Y):= \{\pi \in \cP(X \times Y): (\Proj_X)_\sharp \pi = \mu_X, (\Proj_Y)_\sharp \pi = \mu_Y \},
\end{equation*}
where $\Proj_X:X\times Y \to X$ and $\Proj_Y: X\times Y \to Y$ are the canonical projections. On the other hand, the pushforward of the measure $\mu_X$ by a Borel measurable map $T: X \rightarrow Y$, denoted by  $T_\sharp \mu_X$,  satisfies
\begin{equation*}
    (T_\sharp \mu_X) (B) = \mu_Y (B),
\end{equation*}
for any measurable set $B\subset Y$.
\end{remark}

The $p$-Gromov-Wasserstein (GW) distance was first introduced by M\'emoli in \cite{gwdist} as a method to compare metric measure spaces (mm-spaces). Later work by Sturm \cite{Strum_2020} developed many theoretical aspects of a distance between metric measure spaces related to the GW distance, such as geodesics and curvature bounds, with particular focus on the case $p=2$. We formally define the $p$-th GW distance as follows.

\begin{Def}[$p$-Gromov-Wasserstein distance, \cite{gwdist}]\label{gwdistdef}
    Let $\mathcal{X}=(X, d_X, \mu_X)$ and $\mathcal{Y}=(Y, d_Y, \mu_Y)$ represent mm--spaces where $d_X$ and $d_Y$ are metrics and $\mu_X$ and $\mu_Y$ are probability measures. Then
        $$\GW_{p}(\mathcal{X}, \mathcal{Y}) =
        \frac{1}{2}\inf_{\pi \in \Cpl(\mu_X, \mu_Y)} \left( \int_{(X \times Y)^2} |d_X(x,x') - d_Y(y,y')|^p \textrm{d} \pi(x,y) \textrm{d}\pi(x', y') \right)^{1/p}.$$
\end{Def}

The GW distance defines a metric on the space of isomorphism classes of mm--spaces, where mm--spaces are defined as \textit{isomorphic} if $\GW_p(\mathcal{X}, \mathcal{Y}) = 0$ \cite{gwdist}. Let $\lb \mathcal{X}\rb$ denote an isomorphism class of mm--spaces. We are interested in the case of $p = 2$ as this is the case implemented numerically; see \cite{flamary2021pot}. 
The space of isomorphism classes of mm--spaces equipped with the $p$-th GW distance has geodesic structure. In \cite{Strum_2020}, the form of geodesics in GW space was completely characterized. Sturm's geodesic construction ensures that, by definition, every geodesic of this form is a constant speed geodesic. Geodesics between two isomorphism classes $\lb \mathcal{X} \rb$ and $\rb \mathcal{Y} \lb$ take the explicit form shown below. 

\begin{Thm}[{\cite[Theorem~3.1]{Strum_2020}}]
    Let $\lb\mathcal{X}_0 \rb$ and $\lb \mathcal{X}_1 \rb$ be equivalence classes of mm--spaces, and let
$\overline m\in\Cpl(m_0,m_1)$ be an optimal coupling. Define, for $t\in[0,1]$,
\begin{equation}\label{eq:GW geodesic}
    \lb \mathcal{X}_t \rb := \lb X_0\times X_1,\ d_t,\ \overline m \rb,
\qquad
 d_t\big((x_0,x_1),(y_0,y_1)\big):=(1-t)d_0(x_0,y_0)+t\,d_1(x_1,y_1).
\end{equation}

Then $\lb (\mathcal{X}_t)_{t\in[0,1]} \rb$ is a (constant-speed) geodesic in the GW space connecting $\lb \mathcal{X}_0 \rb$ and $\lb \mathcal{X}_1 \rb$.
\end{Thm}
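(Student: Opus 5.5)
To show that $t\mapsto\lb\mathcal{X}_t\rb$ is a constant-speed geodesic, I will prove the upper bound
\[
\GW_2(\mathcal{X}_s,\mathcal{X}_t)\le |t-s|\,\GW_2(\mathcal{X}_0,\mathcal{X}_1)\qquad\text{for all } s,t\in[0,1].
\]
The matching lower bound then comes from the triangle inequality, since $\GW_2$ is a metric on isomorphism classes. Indeed, for $0\le s\le t\le 1$,
\[
\GW_2(\mathcal{X}_0,\mathcal{X}_1)\le \GW_2(\mathcal{X}_0,\mathcal{X}_s)+\GW_2(\mathcal{X}_s,\mathcal{X}_t)+\GW_2(\mathcal{X}_t,\mathcal{X}_1)\le (s+(t-s)+(1-t))\,\GW_2(\mathcal{X}_0,\mathcal{X}_1).
\]
The chain is therefore an equality, and each inequality $\GW_2(\mathcal{X}_a,\mathcal{X}_b)\le|b-a|\,\GW_2(\mathcal{X}_0,\mathcal{X}_1)$ used in it must itself be an equality. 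This gives the constant-speed geodesic property.

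Two preliminary checks come first.

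\begin{enumerate}
\item \textbf{Endpoints.} $\mathcal{X}_0=(X_0\times X_1,d_0\circ(\Proj_{X_0}\times\Proj_{X_0}),\overline m)$ should be isomorphic to $(X_0,d_0,m_0)$. Take the identity coupling induced by $\Proj_{X_0}$, namely the pushforward of $\overline m$ under $z\mapsto (z,\Proj_{X_0}z)$. The distortion integrand then vanishes identically, so $\GW_2=0$. The case $t=1$ is symmetric.
\item \textbf{Pseudometrics.} Each $d_t$ is only a pseudometric on $X_0\times X_1$. I will note that the GW functional, and hence isomorphism classes, still make sense for such pseudo-mm-spaces, or equivalently that one may pass to the metric quotient without changing $\GW_2$. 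I expect this to be a technical point to state rather than a real obstacle.
\end{enumerate}

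For the upper bound, the key construction is the \emph{diagonal coupling}: since $\mathcal{X}_s$ and $\mathcal{X}_t$ share the same underlying measure space $(X_0\times X_1,\overline m)$, I will use $\pi=(\mathrm{id},\mathrm{id})_\sharp\overline m$. Write $z=(x_0,x_1)$ and $z'=(y_0,y_1)$. Then
\[
d_s(z,z')-d_t(z,z')=(t-s)\big(d_0(x_0,y_0)-d_1(x_1,y_1)\big).
\]
Substituting this into \Cref{gwdistdef} with $p=2$ gives
\[
\GW_2(\mathcal{X}_s,\mathcal{X}_t)\le \tfrac12\Big(\int |d_s-d_t|^2\,\textrm{d}\overline m\,\textrm{d}\overline m\Big)^{1/2}
=|t-s|\cdot\tfrac12\Big(\int|d_0(x_0,y_0)-d_1(x_1,y_1)|^2\,\textrm{d}\overline m(x_0,x_1)\,\textrm{d}\overline m(y_0,y_1)\Big)^{1/2}.
\]
The last factor equals $\GW_2(\mathcal{X}_0,\mathcal{X}_1)$ precisely because $\overline m$ is an optimal coupling. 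This completes the upper bound.

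The step I expect to need the most care is the interface with the definition of the GW space rather than the computation itself. One must justify that $\GW_2$ is a genuine metric on isomorphism classes, which is needed for the triangle inequality; the paper cites this from \cite{gwdist}. One must also confirm that $\lb\mathcal{X}_t\rb$ does not depend on choices within the classes, which is handled by fixing representatives $X_0,X_1$. If optimal couplings are in question, existence follows by compactness of $\Cpl(m_0,m_1)$ and continuity of the distortion functional for compact spaces. Since the theorem assumes $\overline m$ optimal, I will simply use that assumption.
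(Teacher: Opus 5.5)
Your argument is correct and is essentially the standard proof of Sturm's Theorem~3.1, which the paper cites rather than proves. The diagonal coupling $(\mathrm{id},\mathrm{id})_\sharp\overline m$ on the common space $(X_0\times X_1,\overline m)$, together with optimality of $\overline m$, gives $\GW_2(\mathcal{X}_s,\mathcal{X}_t)\le|t-s|\,\GW_2(\mathcal{X}_0,\mathcal{X}_1)$, and the triangle inequality then forces equality.

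One small correction: for $t\in(0,1)$ the function $d_t$ is a genuine metric on $X_0\times X_1$. Both coefficients are positive, so $d_t$ vanishes only when both coordinates agree. Degeneracy therefore occurs only at $t\in\{0,1\}$, and your endpoint isomorphism already handles that case.
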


 Numerically, however, GW geodesics are difficult to compute since they live in the product space $X_0 \times X_1$. For this reason, GW geodesics are closely connected to the concept of the \textit{GW barycenter}, introduced in \cite{peyre2016gromov}. The GW barycenter arises from the Fr\'echet variational principle applied to the space of isomorphism classes of metric measure spaces equipped with the GW metric, and it offers a practical method to define and compute geodesics in Gromov-Wasserstein space. Formally, we define a GW barycenter in the finite case as follows.

\begin{Def}[Finite GW barycenter, \cite{peyre2016gromov}] \label{definition: GW barycenter}
Given finite metric measure spaces $\mathcal X_s=(X_s,d_s,\mu_s)$, $s=1,\ldots,S$, with
$X_s=\{x_1^s,\ldots,x_{N_s}^s\}$, where $q_s$ is the probability vector defining discrete measure $\mu_s=\sum_{i=1}^{N_s}(q_s)_i\delta_{x_i^s}$, and  $D_s\in\mathbb R^{N_s\times N_s}$ denotes the distance matrix $(D_s)_{ij}=d_s(x_i^s,x_j^s)$. Given chosen weights $\lambda:= (\lambda_s)_{s=1}^S$, and a target metric space size $N$ and a fixed target probability vector $q$,  the GW barycenter of $(D_s)^S_{s=1}$ is given by, 
        $$\arg\min_{D \in \mathbb{R}^{N \times N}} \sum_{s=1}^S \lambda_s \GW_p^p(\mathcal X_s, \mathcal X),$$   
where $\mathcal X$ is a mm-space with distance matrix $D$ and associated probability vector $q$. Typically $p=2$ is chosen. 
 \end{Def}
If no prior information is given, the probability vectors are usually set to have uniform weights. 
In \cite{peyre2016gromov}, the matrices $D_s$ can be chosen as more general similarity matrices that do not need to satisfy positivity or the triangle inequality.

GW geodesics and GW barycenters are connected through the two-space barycenter problem. Specifically, specializing the GW barycenter formulation to $S=2$ with weights $(1-t,t)$, $t\in [0,1]$, yields the free-support barycenter problem
\begin{equation}\label{eq:free support barycenter}
    \mathcal{X}_B^t
    \in
    \operatorname*{arg\,min}_{\mathcal X}
    \left\{
    (1-t)\GW_2^2(\mathcal{X}_0,\mathcal X)
    +
    t\GW_2^2(\mathcal{X}_1,\mathcal X)
    \right\}.
\end{equation}
By \cite[Theorem~5.1]{beier2023multi}, any solution $\mathcal{X}_B^t$ is, up to mm-isomorphism, a time-$t$ point on a GW geodesic between $\mathcal{X}_0$ and $\mathcal{X}_1$ of the form given in \eqref{eq:GW geodesic}. In our numerical implementation, we also take $S=2$ and $p=2$, but restrict the candidate barycenter to finite mm-spaces with prescribed size $N$ and uniform probability vectors, as in \Cref{definition: GW barycenter}. Solving this constrained barycenter problem yields an $N\times N$ barycenter distance matrix $D_B^t$, which provides a fixed-size approximation of the corresponding free-support geodesic point.

\subsection{Measure hypernetworks}\label{sec:measure_hypernetworks}

Given the spatial-temporal structures in the dataset due to the availability of spatial coordinates for the gene expression measurements across different developmental stages, we introduce the notion of measure hypernetworks as well as a related distance. This framework enables comparisons of these multiple layers of structures. 

A measure network \cite{Chowdhury_2019} is a generalization of graphs and metric measure spaces, which is a tuple $\mathcal{N}_X =(X,  \omega_X, \mu_X)$ where $X$ is a Polish space and $\mu_X$ is a Borel probability measure, while $\omega_X:X\times X \rightarrow \mathbb{R}$, often referred to as the network function, is a bounded, measurable function. Classical finite graphs with weighted edges are
special cases of measure networks, obtained by taking $X$ as the vertex set, $\mu_X$ as the uniform probability measure on vertices, and $\omega_X$ as an adjacency or weight function. The GW distance (\Cref{gwdistdef}) can also be generalized to the measure network setting by replacing the distance functions by the network functions. The resulting network based GW distance was defined in \cite[Section 2.4]{Chowdhury_2019}.

Classical hypergraphs \cite{zhou2006learning} generalize graphs by allowing edges to connect more than two nodes. To accommodate this higher-order relational structure in a measure-theoretic framework, we define \textit{measure hypernetworks}. More precisely, the GW framework provides a way to compare square-shaped kernels $\omega_X : X \times X \to \mathbb{R}$, where a single space indexes both arguments (e.g.\ samples compared to samples). In contrast, many datasets naturally give rise to rectangular kernels $\hyperomegaX : X \times X' \to \mathbb{R}$, where $X$ and $X'$ represent different types of entities, typically samples and features. Applications may require aligning both samples and features simultaneously, such as spatial transcriptomics datasets to understand the interaction between the temporal and spatial profiles (samples) of the corresponding genes (features). This motivates the notion of a measure hypernetwork of~\cite{Chowdhury_2019} which we formally define below.

\begin{Def}[Measure Hypernetwork {\cite[Definition 4]{chowdhury2024hypergraph}}]
A measure hypernetwork is a quintuple
\[\mathcal{H} = (X,\mu_X, X',\mu_{X'}, \omega_{X, X'}),\]
where $(X,\mu_X)$ and $(X',\mu_{X'})$ are Polish spaces with full-support Borel probability measures, and $\hyperomegaX : X \times X' \to [0,\infty)$ is a bounded, measurable function, called the hypernetwork function.
\end{Def}

\begin{remark}
  Classical finite hypergraphs can be recovered by taking $X$ finite and $X'$ being a collection of subsets of $X$, $\mu_X$ and $\mu_{X'}$ as uniform distributions, and $\omega_{X, X'}$ as the incidence function.  
\end{remark}

\begin{example}[Measure hypernetwork for spatial transcriptomic data] Given spatial transcriptomic data \cite{wang2025drosophila} at a fixed developmental stage, let $X$ denote a set of spatial locations (or cells) and let $X'$ denote a set of genes. A dataset can then be represented as a measure hypernetwork
$\mathcal H_X=(X,\mu_X,X',\mu_{X'},\omega_X),
$
where the hypernetwork function
$\hyperomegaX:X\times X' \to \mathbb R
$
encodes gene expression levels, with $\hyperomegaX(x,x')$ representing the expression of gene $x'$ at location $x$. We note that $\mu_X$ is a probability measure on spatial locations (e.g., the uniform measure assigning equal weight to each location) and $\mu_{X'}$ is a probability measure on genes (e.g., the uniform measure assigning equal weight to each gene).  A second hypernetwork $\mathcal H_Y$ may represent spatial transcriptomic data at a different developmental stage. 

\end{example}   

The above hypernetwork framework provides a natural framework for analyzing sample--feature data, such as spatial transcriptomics datasets, while preserving the full relational structure encoded by the expression matrices.  We now introduce a distance between measure hypernetworks, which naturally extends the co-optimal transport framework of \cite{Redko_2020}. In that work, the focus was restricted to measure hypernetworks with finite underlying sets, primarily motivated by applications to data matrices. The formulation below generalizes this framework to include infinite measure spaces (see also examples in \cite{chowdhury2024hypergraph}). This extension is essential for obtaining a complete metric space, as established in \cite[Theorem 1]{chowdhury2024hypergraph}. In practice, however, computations are always performed on finite approximations, reducing back to the original finite setting.
\begin{Def}[Hypernetwork $p$-Distance {\cite[Definition 5]{chowdhury2024hypergraph}}]\label{Def:COOT}
Let 
\[\mathcal{H}_X= (X,\mu_X,X',\mu_{X'},\omega_{X, X'}), \quad \mathcal{H}_Y = (Y,\mu_Y,Y',\mu_{Y'},\omega_{Y,Y'})\] 
 be measure hypernetworks for $p \in [1,\infty)$. For couplings $\pi \in \Cpl(\mu_X,\mu_{Y})$ and $\pi' \in \Cpl(\mu_{X'},\mu_{Y'})$, define the $p$-th co-optimal distortion 
\[
\operatorname{dis}_p(\pi,\pi') = 
\left(
\int_{X \times Y} \int_{X' \times Y'} 
|\hyperomegaX(x,x') - \hyperomegaY(y,y')|^p \, \textnormal{d} \pi(x,y)\, \textnormal{d} \pi'(x',y')
\right)^{1/p}.
\]
The $p$-hypernetwork co-optimal transport distance is defined as 
\begin{equation}\label{eq:COOT}
    \textnormal{COOT}_p(\mathcal{H}_X,\mathcal{H}_Y) = \frac{1}{2}
\inf_{\pi \in \Cpl(\mu_X,\mu_{Y})} \inf_{\pi' \in \Cpl(\mu_{X'},\mu_{Y'})} \operatorname{dis}_p(\pi,\pi').
\end{equation}
\end{Def}

This distance generalizes the GW distance to higher-order structures by simultaneously matching nodes and hyperedges (edges connecting more than two nodes) and preserving the full relational information in $\hyperomegaX,\hyperomegaY$. In particular, it provides simultaneous alignments of spatial locations and genes across different developmental stages for the fruit fly spatial transcriptomic data. We note that a major limitation of the COOT distance is that it requires measures of equal mass and is highly sensitive to outliers although recent extensions to the conic Co-Optimal transport distance have addressed this limitation (see \cite{oliver2025conic}).

\subsection{Ricci Curvature}\label{subsection: Ricci Curvature}
Ollivier–Ricci (OR) curvature \cite{Ricci_curvature_Ollivier2009, Graph_Curvature_Sandhu2015} on a graph intuitively measures the deviation of the graph from being locally grid-like analogously to being `flat’ in continuous spaces. ‘Flatness’ of a network can be understood in terms of its local connectivity: the distance of a pair of nodes is the same as the average distance of their neighborhoods. Thus positive (or negative) OR curvature of an edge indicates that it resides in a region of the graph that is more (or less) connected than a grid. 
We present the standard definition for OR curvature on a graph.

\begin{Def}[Ollivier-Ricci Curvature on a Graph \cite{Ricci_curvature_Ollivier2009, Graph_Curvature_Sandhu2015}] \label{definition: OR curvature}
	Let \(G = (V,E, w) \) be an undirected weighted graph. For $x\in V$, define  probability measure \(\mu_{x}\) as  
\begin{equation}\label{eq:nodal measure}
 \mu_{x}(z) := \frac{w(x,z)}{\sum_{\{x,v\}\in E} w(x,v)} \text{ if } \{x,z\}\in E,  \text{ otherwise } \mu_x(z) = 0,   
\end{equation}
where \(w(x,y)\) denotes the weight of edge $\{x,y\}$. Then the \textit{Ollivier-Ricci curvature} along edge $\{x,y\}\in E$ is
	\[
		\kappa_{x,y} \coloneqq 1 - \frac{W_{1}(\mu_{x}, \mu_{y})}{d(x,y)},
	\]
    where  $d$ denotes the corresponding shortest-path distance on the graph, and $W_1$ is the 1-Wasserstein distance defined in \eqref{definition: Wasserstein Distance Abstract} with $p=1$.
\end{Def}

The nodal probability measure \(\mu_{x}\) can be regarded as the distribution of a one-step random walk starting from \(x\) with interaction strength proportional to the weights. This notion can be extended from one step random walks to Markov diffusion processes \cite{Unfolding_the_m_Gosztolai2021}, yielding \textit{dynamical Ollivier-Ricci} curvature on a graph. A continuous time diffusion on a graph \(G\) is constructed by the standard procedure \cite{Spectral_Graph_Chung1997} of defining the normalized graph Laplacian matrix \(\mathbf{L} \coloneqq K^{-1}(K - A)\), where \(A\) denotes the weighted adjacency matrix and \(K\) is the diagonal matrix of node degrees with \(K_{ii} = \sum_{j} A_{ij}\). Then the probability measure of the diffusion starting from the unit mass \(\delta_{x}\) on node \(x\) evolves according to
\begin{equation} \label{eqn: Markov Diffusion probability distribution}
	\theta_{x}(\tau) = \delta_{x}e^{-\tau\mathbf{L}}.
\end{equation}
With this, the dynamical Ollivier-Ricci curvature can be defined analogously to classical OR curvature.

\begin{Def}[Dynamical Ollivier-Ricci Curvature, Equation (2) \cite{Unfolding_the_m_Gosztolai2021}]\label{def: dynamic ORcurv}
	Given an undirected weighted graph \(G = (V,E, w)\) with induced geodesic distance \(d\). Consider the probability measures \(\theta_{x}\) defined by \cref{eqn: Markov Diffusion probability distribution}. Then the \textit{dynamical Ollivier-Ricci curvature} along edge $\{x,y\}\in E$ is
	\[
	\kappa_{x,y}(\tau) \coloneqq 1 - \frac{W_{1}(\theta_{x}(\tau), \theta_{y}(\tau))}{d(x,y)}, \quad \tau > 0
	\]
\end{Def}

\section{Methodology and analysis}\label{sec:method}

\subsection{Curvature as geometric signatures}\label{sec: curvature signatures}

For each developmental stage, we represent the gene expression profile as an undirected weighted gene co-expression network $G=(V,E,w)$, whose nodes are the selected genes and whose weighted edges encode the retained nonnegative co-expression strengths derived from the processed gene--gene correlations described in Section \ref{sec:data preprocessing}.

We compute OR curvature in the following two settings.
\begin{enumerate}
     \item \textbf{Graph setting.}
     For an observed gene co-expression network $G=(V,E,w)$, we compute edge-level OR curvature on the retained edges following \Cref{definition: OR curvature}, with respect to the graph metric $d_G$. We denote the resulting edge-indexed curvature distribution by
\[
    \boldsymbol{\kappa}_G \coloneqq \bigl(\kappa_{x,y}\bigr)_{\{x,y\}\in E},
\]
where $\kappa_{x,y}$ denotes the curvature of the edge connecting the co-expressed genes $x$ and $y$.

\item \textbf{Metric measure (mm) space setting.}
To accommodate the trajectory analysis via the GW geodesic in \eqref{eq:GW geodesic}, we view each observed graph as a finite metric measure space $\mathcal{X}=(X,d_X,\mu_X)$, where $X=V$ is the set of nodes, $d_X$ is the graph metric based on shortest-path distances, and  $\mu_X$ is the uniform probability measure on $X$. Since the interpolated spaces are finite mm-spaces rather than observed gene co-expression networks, their local measures are constructed from the metric structure instead of edge weights. 
Writing the interpolated spaces as $\mathcal{X}_t=(X_t,d_t,\mu_t)$, $t\in[0,1]$, we define the associated curvature distribution in the following way.

We construct a graph $G_{B}^{w,t} = (X_t, E_t, w_t)$ whose induced graph metric approximate the mm-space metric $d_t$, using the graph construction described below in \Cref{subsection: graph approximations}.  Here the node set $X_t$ is chosen as the set of common genes as described in \Cref{sec:data preprocessing}, and the edge set $E_t$ is determined from the GW barycenter associated with the underlying unweighted graphs, whereas the edge-weight function $w_t$ is obtained from the corresponding GW barycenter associated with the weighted graphs. Then, the graph-based OR curvature distribution $\boldsymbol{\kappa}_{G_B^{w,t}}$ is computed. 

\end{enumerate}
\subsubsection{Graph approximations of interpolated mm-spaces} \label{subsection: graph approximations}
We describe how the GW barycenter in \Cref{definition: GW barycenter} is used to construct weighted graphs that approximate the interpolated mm-spaces, outlined in \Cref{sec: curvature signatures}.

The construction proceeds in two steps to obtain a weighted graph approximation from the interpolated metric information. First, the graph topology is determined from the GW barycenter of the unweighted shortest-path distance matrices; in particular, two vertices are connected when their barycenter distance is sufficiently close to one. Second, the GW barycenter of the weighted shortest-path distance matrices is used to assign weights to the resulting edges.

Consider two weighted graphs $G_0$ and $G_1$ of $N$ vertices as finite mm-spaces with distance matrices $D_0, D_1 \in \mathbb{R}^{N\times N}$ respectively, and uniform probability measure.  Let \(D_B^t\) denote the GW barycenter matrix corresponding to the GW barycenter as in \cref{definition: GW barycenter}; in our case, that is
\begin{equation} \label{eq: distance matrix interpolation}
	D_{B}^{t} = \arg\min_{D \in \mathbb{R}^{N \times N}} t \GW_2^2(D_0,D) + (1-t)\GW_2^2(D_1,D).
\end{equation}

\paragraph{Step 1: construction of the graph topology.}
Let $G_{0}$ and $G_{1}$ be connected weighted graphs with $N$ vertices, and let $D(G_{0})$ and $D(G_{1})$ denote their unweighted shortest-path distance matrices. Let $D_B^{t}$ be their GW barycenter defined by \cref{eq: distance matrix interpolation}. The associated unweighted graph is defined by
\begin{equation}\label{eqn: distance matrix to unweighted graph}
G_{B}^{t} = (V,E^{t}), \qquad
V = \left\{1,\ldots,N\right\}, \qquad
E^{t}
=
\left\{
\{i,j\}
\;\middle|\;
i,j\in V,\;
\left|D_{B}^{t}(i,j)-1\right|<\delta
\right\}.
\end{equation}
where $\delta\in(0,1)$ is a prescribed tolerance.
\begin{remark}
At the endpoints, $G_{B}^{0}=G_{0}$ and $G_{B}^{1}=G_{1}$ because $D_{B}^{0}=D_{0}$ and $D_{B}^{1}=D_{1}$. For intermediate values of $t$, the barycenter matrix $D_B^t$ need not be the shortest-path distance matrix of a graph. Consequently,
$D(G_B^t)\neq D_B^t$
in general, and $G_B^t$ can be  interpreted as a graph approximation of the interpolated metric structure.
\end{remark}

\paragraph{Step 2: construction of the weighted graph.}
Let $D^{w}(G_{0})$ and $D^{w}(G_{1})$ denote the weighted shortest-path distance matrices of $G_{0}$ and $G_{1}$, respectively, and let $D_{B}^{w,t}$ denote the corresponding GW barycenter distance matrix defined by \cref{eq: distance matrix interpolation}. The weighted graph is then defined by
\begin{equation} \label{eqn: distance matrix to weighted graph}
G_{B}^{w,t} = ( G_{B}^{t}, w), \qquad w(\left\{ i,j \right\}) = D_{B}^{w,t}(i,j),
\end{equation}
where $G_{B}^{t}$ is constructed in Step 1. 

\begin{remark}
For disconnected graphs, infinite entries in the unweighted shortest-path distance matrices, which determine the graph topology as in Step 1, are replaced by a large finite value $\gamma\gg0$. In the weighted matrices, which encode interaction strengths, missing interactions are assigned value zero. The two-step construction is then applied to the resulting finite matrices:
\begin{equation}\label{eqn:disconnnected distance thresholding}
\hat{D}(G)(i,j)
\coloneqq
\begin{cases}
D(G)(i,j), & \text{if } D(G)(i,j)<+\infty,,\\
\gamma, & \text{otherwise},
\end{cases}
\qquad
\hat{D}^{w}(G)(i,j)
\coloneqq
\begin{cases}
D^{w}(G)(i,j), & \text{if } D(G)(i,j)<+\infty,\\
0, & \text{otherwise}.
\end{cases}
\end{equation}

\end{remark}

\subsection{Inference with curvature and Gromov-Wasserstein geodesics}
\subsubsection{Curvature  comparison between real and interpolated data}\label{sec:geodesicclassification}
To assess the curvature distributions measured along the GW geodesics in comparison to the curvature distributions in real data, we propose a classification framework that predicts the closest developmental stage. This approach provides a more objective and robust evaluation of whether GW geodesics can effectively represent the underlying real data, versus visually comparing trends in mean and median curvature. We use three distinct methods for this comparison, each using a discrepancy measure $\delta(\boldsymbol{\kappa}_G, \boldsymbol{\kappa}_{G_t})$ defined as follows.

\begin{enumerate}
    \item \textbf{Scalar Approach (Mean and Median).} The curvature distribution is reduced to a single summary statistic $\mathcal{S}(\boldsymbol{\kappa}) \in \{\bar{\kappa}, \tilde{\kappa}\}$, where
        \[
    \bar{\kappa} = \frac{1}{|E|}\sum_{\{x,y\}\in E}\kappa_{x,y}, \qquad
    \tilde{\kappa} = \operatorname{median}\bigl\{\kappa_{x,y}:\{x,y\}\in E\bigr\},
    \]
    and the discrepancy is
    \[
    \delta_\mathrm{scalar}(\boldsymbol{\kappa}_G, \boldsymbol{\kappa}_{G_t}) := \bigl|\mathcal{S}(\boldsymbol{\kappa}_G) - \mathcal{S}(\boldsymbol{\kappa}_{G_t}) \bigr|.
    \]
    
    \item \textbf{Wasserstein Distance.} Let $\mu_G$ and $\mu_{d_t}$ denote the empirical distributions of $\boldsymbol{\kappa}_G$ and $\boldsymbol{\kappa}_{G_t}$, respectively. The discrepancy $\delta_W$ is the $1$-Wasserstein distance as defined in \Cref{definition: Wasserstein Distance Abstract},
    where we minimize over $\Cpl(\mu_G, \mu_{d_t})$, the set of all couplings of $\mu_G$ and $\mu_{d_t}$. Unlike $\delta_\mathrm{scalar}$, $\delta_W$ computes distances between the curvature distributions and is sensitive to differences in location, spread, and shape. We compute this distance between the raw, unstandardized curvature distributions. 
    \item \textbf{Symmetrized Kullback-Leibler (KL) Divergence.} Let $\hat{f}_G$ and $\hat{f}_{d_t}$ denote kernel density estimates (KDEs) fitted to the standardized curvature vectors
    \[
    \tilde{\boldsymbol{\kappa}} := \frac{\boldsymbol{\kappa} - \bar{\kappa}}{\sigma_\kappa}, \qquad \sigma_\kappa = \Bigl(\tfrac{1}{|E|} \sum_{\{x,y\} \in E} (\kappa_{x,y} - \bar{\kappa})^2 \Bigr)^{1/2}.
    \]
    Standardizing to zero mean and unit variance before fitting the KDE focuses the divergence on shape differences, such as skewness, modality, and tail weight, rather than differences between scale. Following the formula shown in the first line of Equation (7) in \cite{symmetrickl}, we define our discrepancy measure in this setting as the symmetrized KL divergence: 
    \[
    \delta_\mathrm{KL}(\boldsymbol{\kappa}_G, \boldsymbol{\kappa}_{G_t}) := \frac{1}{2} \Bigl[
    D_\mathrm{KL}\bigl( \hat{f}_G \, \|\,\hat{f}_{d_t}\bigr) + D_\mathrm{KL}\bigl(\hat{f}_{d_t}\,\|\, \hat{f}_G \bigr)
    \Bigr],
    \]
    where $D_\mathrm{KL}(p\|q) = \int p(u) \log\tfrac{p(u)}{q(u)}\,\mathrm{d}u$ (see Definition 1.1, Equation (1.3), in \cite{YAO2025104635}). Symmetrization ensures that $\delta_\mathrm{KL}$ does not depend on the choice of reference distribution.
\end{enumerate}

We then use these discrepancy measures in the following rank ordering procedure. For each real graph at observed time $t_r$, we compute the discrepancy $d(t_r, t_g)$ between its curvature distribution and that of every geodesic interpolation point $t_g \in \{t_1, \dots, t_{40}\}$ along the discretized GW geodesic connecting the first and last real graphs. The predicted time point is the geodesic index minimizing this discrepancy: $\hat{t}_r = \arg \min_{t_g} d(t_r, t_g)$. Repeating this for all real graphs yields a predicted ordering, which is then compared against the true, known temporal ordering using the rank correlation measures Kendall's $\tau$ and Spearman's $\rho$. 
For the purposes of this classification, we are interested in a relative temporal ordering more so than exact matching, and Kendall and Spearman rank correlations assess the association between true and predicted order without requiring predictions to land on the exact correct index. 

For each discrepancy measure, the classification procedure produces a predicted geodesic index $\hat{t}_r$ for each of the five real graphs. We then treat the true temporal order $(1, 2, 3, 4, 5)$ and the predicted order $(\hat{r}_1, \hat{r}_2, \dots, \hat{r}_5)$, where $\hat{r}_i$ is the rank of $\hat{t}_i$ among the five predicted indices, as two ranking of the same five items, and we compute Kendall's $\tau$ and Spearman's $\rho$ between them. A value near $+1$ indicates that the discrepancy measure recovers the correct developmental ordering: real graphs that are temporally later are reliably matched to later points along the geodesic. A value near 0 indicates no systematic relationship between predicted and true order; that is, the method's discrepancy measure carries no temporal signal beyond random chance. Negative values would indicate an inversion of the expected ordering.

Given two rankings of a collection of graphs, consider all pairs of items. A pair is called concordant if the two rankings agree on their relative order and discordant if they disagree. Then Kendall's $\tau$ is computed via the $\tau_b$ statistic:
\[\tau_b = \frac{P - Q}{(P + Q+ U)\sqrt{P + Q + T}},\]
where $P$ denotes the number of concordant pairs; $Q$ is the number of discordant pairs; $T$ is the number of tied pairs only in the true temporal order; and $U$ is the number of tied pairs only in the predicted order. In particular, this version of Kendall's $\tau$ was first introduced in \cite{kendalltau} as a method to account for potential ties. Assuming no ties, Spearman's $\rho$ is computed by
\[
\rho = 1 - \frac{6 \sum^n_{i=1}d_i^2}{n(n^2-1)},
\]
where $d_i$ is the rank difference for item $i$ and $n$ denotes the number of graphs. In the event of ties, fractional ranks for tied values are used.

We summarize this rank ordering procedure in \Cref{alg:rankordering}.

\begin{algorithm}[H]
    \caption{Curvature-Based Temporal Classification}
    \label{alg:rankordering}
    \begin{algorithmic}[1]
        \Require{
        Real graphs $G_1, \dots, G_n$ at known time points $t_1 < \dots < t_n$; geodesic interpolants $\{d_\tau\}_{\tau \in T}$ over a grid $T = \{\tau_1, \dots, \tau_M\}$; discrepancy measure $\delta \in \{\delta_\mathrm{scalar}, \delta_W, \delta_\mathrm{KL}\}$
        }
        \Ensure{
        Predicted geodesic indices $\hat{\tau}_1, \dots, \hat{\tau}_n$ and rank correlations $\tau_K$, $\rho_S$
        }

        \medskip
        \State \textbf{// Precompute geodesic curvature distributions}
        \For{each $\tau_j \in T$}
            \State Compute $\boldsymbol{\kappa}_{d_{\tau_j}} := \left(\kappa_{d_{\tau_j}}(x,y) \right)_{\{x,y\} \in E}$; remove non-finite entries
        \EndFor

        \medskip
        \State \textbf{//Classify each real graph}
        \For{each real graph $G_i$, $i = 1, \dots n$}
            \State Compute $\boldsymbol{\kappa}_{G_i} := (\kappa_{x,y})_{\{x,y\} \in E}$
            \State Remove non-finite entries from $\boldsymbol{\kappa_{G_i}}$
            \For{each $\tau_j \in T$}
                \State Compute $\delta_{ij} \leftarrow \delta\left(\boldsymbol{\kappa}_{G_i}, \boldsymbol{\kappa}_{d_{\tau_j}} \right)$
            \EndFor
            \State $\hat{\tau}_i \leftarrow \tau_j$ where $j^* = \arg \min_j \delta_{ij}$
        \EndFor

        \medskip
        \State \textbf{// Evaluate ordering recovery via rank correlation}
        \State Let $\boldsymbol{r} = (1, 2, \dots, )$ be the true rank vector
        \State Let $\boldsymbol{\hat{r}} = (\hat{r}_1, \dots, \hat{r}_n)$ be the predicted rank vector, where $\hat{r}_i = j_i^*$
        \State Compute Kendall's $\tau_K$ and Spearman's $\rho_S$ between $\boldsymbol{r}$ and $\boldsymbol{\hat{r}}$
    \end{algorithmic}
\end{algorithm}

\subsubsection{Predictions using curvature of real data}\label{subsubsec:realcurvaturecomp}
Our second classification task differs from the geodesic-based approach described in \Cref{sec:geodesicclassification} in both its reference objects and its evaluation criterion. Rather than assigning each real graph to a point along a continuous geodesic interpolation, we now classify spatial slices (ten slices per developmental stage) to one of the five developmental stages by comparing to the reference curvature distributions $\{\boldsymbol{\kappa}_{G_i}\}_{i=1}^5$, where each $G_i$ corresponds to the real graph at stage $i$ obtained by taking the common spatial slices and the common genes across all developmental stages. The discrepancy measures $\delta_\mathrm{scalar}$, $\delta_W$, and $\delta_{KL}$ are identical to those described in \Cref{sec:geodesicclassification}. Let $S = \{s_1, \dots, s_5\}$ denote the five developmental stages in temporal order, with associated reference curvature vectors $\boldsymbol{\kappa}_{G_i} := (\kappa_{x,y})_{\{ x,y\}\in E_i}$ extracted from $G_i$. For each stage $s_i$ and spatial slices $b \in \{1, \dots, 10\}$, let $\boldsymbol{\kappa}_{s_i, b}$ denote the curvature vector of the corresponding spatial slice, computed from a thresholded correlation graph. Each slice $\boldsymbol{\kappa}_{s_i, b}$ is assigned to the stage $\hat{s}$ whose reference curvature distribution minimizes the chosen discrepancy:
\[\hat{s}(s_i, b) = \arg \min_{j \in \{1, \dots, 5\} } \delta(\boldsymbol{\kappa}_{s_i, b}, \boldsymbol{\kappa}_{G_j}).\]
This is a nearest-neighbor rule in the space of curvature distributions, with the five reference distributions acting as class prototypes. The standardization conventions follow those described in \Cref{sec:geodesicclassification} for the geodesic method, with the reference distribution $\boldsymbol{\kappa}_{G_j}$ in place of the geodesic distribution $\boldsymbol{\kappa}_{d_\tau}$.

Using the 50 labeled observations, we compute classification accuracy as our evaluation metric:
\[\mathrm{Acc} = \frac{1}{50} \sum_{i=1}^5 \sum_{b=1}^{10} \mathds{1}[\hat{s}(s_i, b) = s_i].\]
Since each developmental stage has 10 observations and there are five developmental stages, a random classifier assigning stages uniformly at random would achieve expected accuracy of 0.2. We summarize this classification procedure in \Cref{alg:classifier}.

\begin{algorithm}[H]
    \caption{Slice-to-Stage Curvature Classification}
    \label{alg:classifier}
    \begin{algorithmic}[1]
        \Require{
        Real reference graphs $G_1, \dots, G_5$ at developmental stages $s_1 < \dots < s_5$; 
        
        spatial slices $\{(s_i, b)\}$ for $i = 1, \dots, 5$, $b = 1, \dots, 10$;
        
        discrepancy measures $\delta \in \{\delta_\mathrm{scalar}, \delta_W, \delta_{KL}\}$;
        
        correlation threshold $\theta > 0$
        }
        \Ensure{Predicted stages $\hat{s}(s_i, b)$, accuracy $\mathrm{Acc}$}

        \medskip
        \State \textbf{// Build reference curvature distributions}
        \For{each reference graph $G_j$, $j = 1, \dots, 5$}
            \State Compute $\boldsymbol{\kappa}_{G_j} := (\kappa_{x,y})_{\{x,y\}\in E_j}$; remove non-finite entries
        \EndFor

        \medskip
        \State \textbf{// Build spatial slice curvature distributions}
        \For{each stage $s_i$ and replicate $b = 1, \dots, 10$}
            \State Load correlation matrix; drop genes with fully missing data
            \State Threshold: retain edge $\{x,y\}$ iff $\rho_{xy} > \theta$, with weight $\rho_{xy}$
            \State Compute $\boldsymbol{\kappa}_{s_i, b} := (\kappa_{x,y})_{\{x,y\}\in E}$; remove non-finite entries
            \EndFor

            \medskip
            \State \textbf{// Classify each slice by nearest-neighbor assignment}
            \For{each slice $(s_i, b)$}
                \For{each reference stage $j = 1, \dots 5$}
                    \State Compute $\delta_{(i,b), j} \leftarrow \delta(\boldsymbol{\kappa}_{s_i, b}, \boldsymbol{\kappa}_{G_j})$
                \EndFor
                \State $\hat{s}(s_i, b) \leftarrow s_{j^*}$, where $j^* = \arg \min_j \delta_{(i,b), j}$
            \EndFor

            \medskip
            \State \textbf{// Evaluate classification performance}
            \State Acc $\leftarrow \frac{1}{50} \sum_{i=1}^5 \sum_{b=1}^{10} \mathds{1}[\hat{s}(s_i, b) = s_i]$
    \end{algorithmic}
\end{algorithm}

\subsection{Trend analysis with OR curvature}\label{subsection: trend analysis with OR curvature}

To assess how well the GW trajectory assesses developmental changes in average curvature, the following comparisons are considered:

\begin{itemize}
    \item[i)] \textbf{Stagewise predictive accuracy: observed stages versus endpoint geodesic.} Let $G_s$, $s=1,\ldots,5$, denote the observed weighted graphs at the five developmental stages, where $G_1$ corresponds to the initial embryonic stage and $G_5$ to the terminal larval stage. Let $\{t_s
    \}_{s=1}^{5} \in [0,1]$ be the corresponding developmental pseudo-times, derived from the reported developmental times \cite{tyler2000developmental}. We compare
\begin{equation*}
\left(\widetilde{\kappa}_{G_s}\right)_{s=1}^{5}
\qquad\text{and}\qquad
\left(\widetilde{\kappa}_{G_B^{w,t_s}}\right)_{s=1}^{5},
\end{equation*}
where $G_B^{w,t_s}$ is defined by the GW geodesic connecting $G_1$ and $G_5$ as described in \Cref{sec: curvature signatures},  and $\widetilde{\kappa}_G$ denotes the average edgewise OR curvature of a graph $G$.  We compute the relative $l_1$ error between these two curvature distributions, as well as the $R^2$ score of the GW-geodesic-based predictions $\left(\widetilde{\kappa}_{G_B^{w,t_s}}\right)_{s=1}^{5}$. 

\item[ii)] \textbf{Trajectory-level trend consistency: endpoint geodesic versus piecewise-stitched trajectory.}  The average curvatures of the graphs constructed from the GW geodesic from $G_1$ to $G_5$ are compared with those constructed from a piecewise GW path through all five stages. In particular, using the reported developmental times \cite{tyler2000developmental}, the interstage geodesics $G_1\to G_2$, $G_2\to G_3$, $G_3\to G_4$, and $G_4\to G_5$
are rescaled and concatenated to form a piecewise path $G_{\mathrm{pw}}^{w,t}$ on a common pseudo-time grid $\{t_j\}_{j=1}^{n}$ for some integer $n \geq 5$. The comparison is between
\begin{equation*}
\left(\widetilde{\kappa}_{G_B^{w,t_j}}\right)_{j=1}^{n}
\qquad\text{and}\qquad
\left(\widetilde{\kappa}_{G_{\mathrm{pw}}^{w,t_j}}\right)_{j=1}^{n}.
\end{equation*}
 We use the following sign alignment to quantify the directional consistency of average-curvature changes between the endpoint geodesic and the piecewise GW path.
    
\end{itemize}

\subsubsection{Sign alignment for scalar trends}
Following the directional symmetry criterion \cite{tay2001application}, we quantify agreement between the curvature trends in (ii) above using sign alignment. More generally, let $f,g:\mathbb{R}\to\mathbb{R}$ denote two scalar
trends sampled at $t_1<\cdots<t_n$.  Their sign-alignment score is defined by
\begin{equation}\label{eq: sign assignment}
S(f,g)
=
\frac{1}{n-1}
\sum_{i=1}^{n-1}
\mathbb{I}\bigl(
\operatorname{sign}(\Delta f(t_i))
=
\operatorname{sign}(\Delta g(t_i))
\bigr),
\end{equation}
where $\mathbb{I}$ denotes the indicator function and  $\Delta f(t_i) := f(t_{i+1}) - f(t_i)$.  To account for small oscillations, for $\epsilon>0$ we define the relaxed sign-alignment score by

\begin{equation}\label{eq: relaxed sign assignment - previous}
S_\epsilon(f,g)
=
\frac{1}{n-1}
\sum_{i=1}^{n-1}
\mathbb{I}\Bigl(
\operatorname{sign}(\Delta f(t_i))
=
\operatorname{sign}(\Delta g(t_i))
\ \text{or}\
\bigl(
|\Delta f(t_i)|<\epsilon
\ \text{or}\
|\Delta g(t_i)|<\epsilon
\bigr)
\Bigr).
\end{equation}
 We will choose $\epsilon$ as a percentage of maximum variation  $  \max \{ | \Delta f_i|,|\Delta g_i|: i =1,..., n \}.$ Note that $0\leq S, S_\epsilon\leq 1$.

\subsection{Trend analysis with dynamic OR curvature}\label{sec:dynamic-curvature-distributions}
 
Section 3.3 compares average OR curvature along the GW interpolations. Here we compare the edge-wise dynamic OR curvature distributions of the five observed stage-specific gene co-expression networks across diffusion parameters $\tau$. For each developmental stage, we compute the distance-scaled \footnote{To retain sensitivity to network changes reflected in the scale of graph weights, we use the distance-scaled version. For example, uniform rescaling leaves the original curvature unchanged, whereas the distance-scaled curvature changes with the scale.} dynamic OR curvature
\begin{equation}\label{eq:distance-scale curvature}
\kappa^{\mathrm{ds}}_{x,y}(\tau)
=d_G(x,y)\kappa_{x,y}(\tau)
=d_G(x,y)-W_1\!\left(\theta_x(\tau),\theta_y(\tau)\right).
\end{equation} 
The superscript $\mathrm{ds}$ distinguishes these values from the original dynamic OR curvature in \Cref{def: dynamic ORcurv}. We use 19 values of $\tau$ equally spaced on the $\log_{10}(\tau)$ scale between $0.01$ and $10$. Since the stage-specific networks have different retained edge sets, their curvature values are compared as empirical distributions rather than matched edges.

For each stage and value of $\tau$, we compute the median, interquartile range, and empirical cumulative distribution function (ECDF). Pairwise stage differences are measured using the 1-Wasserstein discrepancy $\delta_W$ from
\Cref{sec:geodesicclassification} between the distance-scaled curvature distributions. We compute this discrepancy for the raw curvature distributions and again after centering each distribution by its median and scaling by its interquartile range {\color{red} \footnote{The median and interquartile range provide robust measures of location and scale and are less sensitive to extreme observations than the mean and standard deviation.}}. The latter measures differences remaining after these location and scale adjustments.

For each pair of stages, we define the Wasserstein discrepancies averaged over $\log_{10}(\tau)$:  
\begin{equation}\label{eq:log avg W1}
    \delta_W^{\textrm{avg}}:= \frac{1}{\log_{10}\tau_{\max} - \log_{10}\tau_{\min}}
\int_{\tau_{\min}}^{\tau_{\max}}
\delta_W(\tau)\,{d\log_{10}(\tau)},
\end{equation}
where $\tau_{\min}=0.01$ and {$\tau_{\max}=10$}, and approximate this quantity
using the trapezoidal rule.
For consecutive developmental stages, we also record which transition has the largest discrepancy at each sampled value of $\tau$. These comparisons are descriptive and do not treat edge-level curvature values as independent biological replicates. The procedure is summarized in Algorithm \ref{alg:dynamiccurvature} and applied to the \textit{Drosophila} data in \Cref{subsec:dynamic-curvature-results}.

\begin{algorithm}[H]
    \caption{Comparison of Dynamic OR Curvature Distributions}
    \label{alg:dynamiccurvature}
    \begin{algorithmic}[1]
        \Require{
        Observed graphs $G_1, \dots, G_5$; diffusion parameters
        $T=\{\tau_1,\dots,\tau_M\}$; dynamic OR curvature distributions
        $\boldsymbol{\kappa}_{G_i}(\tau_j)$}
        \Ensure{
        Stage-level summaries, pairwise distributional comparisons, and
        summaries across diffusion scales}
        \medskip
        \State \textbf{// Summarize observed-stage curvature distributions}
        \For{each graph $G_i$ and $\tau_j \in T$}
            \State Verify that the curvature values are finite
            \State Compute the median, interquartile range, and ECDF of
            $\boldsymbol{\kappa}_{G_i}(\tau_j)$
        \EndFor 
        \medskip
        \State \textbf{// Compare stage-specific curvature distributions}
        \For{each pair of graphs $(G_i,G_k)$ and $\tau_j \in T$}
            \State Compute the raw Wasserstein discrepancy
            \State Median-center each distribution, divide by its
            interquartile range, and recompute the Wasserstein discrepancy
        \EndFor 
        \medskip
        \State \textbf{// Summarize differences across diffusion scales}
        \For{each pair of graphs $(G_i,G_k)$}
            \State Compute scale-averaged Wasserstein discrepancies over
            $\log_{10}(\tau)$
        \EndFor
        \For{each $\tau_j \in T$}
            \State Record which consecutive developmental transition has the
            largest raw and centered/scaled discrepancy
        \EndFor
    \end{algorithmic}
\end{algorithm}

\subsection{Quantifying developmental changes via COOT distance }\label{sec:coot_development_trajectory}
Each developmental stage $s \in \{1,\dots,S\}$ is represented by a rectangular gene--cell interaction matrix $\mathcal{H}_s \in \mathbb{R}^{n_c \times n_g}$, where $n_c$ denotes the number of spatial locations (cells) and $n_g$ denotes the number of genes. We interpret $\mathcal{H}_s$ as the discrete representation of a measure hypernetwork $(X_s,\mu_s, X_s',\mu_s',\omega_s)$, where $X_s$ is the set of cells and $X_s'$ is the set of genes. The corresponding empirical probability measures are defined as
$\mu_s = \frac{1}{n_c}\sum_{i=1}^{n_c}\delta_{x_i}$ and $\mu_s' = \frac{1}{n_g}\sum_{j=1}^{n_g}\delta_{x'_j}$. The hypernetwork function is given by
$\omega_s : X_s \times X_s' \rightarrow \mathbb{R}$,
and is discretely represented by the matrix $\mathcal{H}_s$, where each entry
$(\mathcal{H}_s)_{ij} = \omega_s(x_i, x'_j)$ encodes the expression level of gene $j$ at spatial location $i$.

Comparisons between developmental stages are performed using the Co-Optimal Transport (COOT) framework (Definition~\ref{Def:COOT}). The dissimilarity between any two stages $s$ and $s'$ is defined as $D_{ss'} = \mathrm{COOT}_p(\mathcal{H}_s, \mathcal{H}_{s'})$,
corresponding to the discrete hypernetwork $p$-distance. This distance is computed by jointly optimizing over couplings
$\pi \in \mathrm{Cpl}(\mu_s,\mu_{s'})$ on the cell spaces and
$\pi' \in \mathrm{Cpl}(\mu'_s,{\mu'}_{s'})$ on the gene spaces, using block coordinate descent (see \cite[Algorithm~1]{Redko_2020}). The optimization simultaneously aligns both the spatial (cell) and molecular (gene) structures, yielding a transport-based discrepancy that captures differences in their joint relational organization rather than entrywise comparisons. The resulting pairwise dissimilarities form a symmetric matrix $D \in \mathbb{R}^{S \times S}$. Classical multidimensional scaling (MDS) is then applied to obtain one-dimensional embeddings $y_1,\dots,y_S \in \mathbb{R}$ that preserve the COOT-induced geometry of developmental stages.

\section{Application to \textit{Drosophila} data}\label{sec:application}
\begin{figure}
    \centering
    \includegraphics[width=1.0\linewidth]{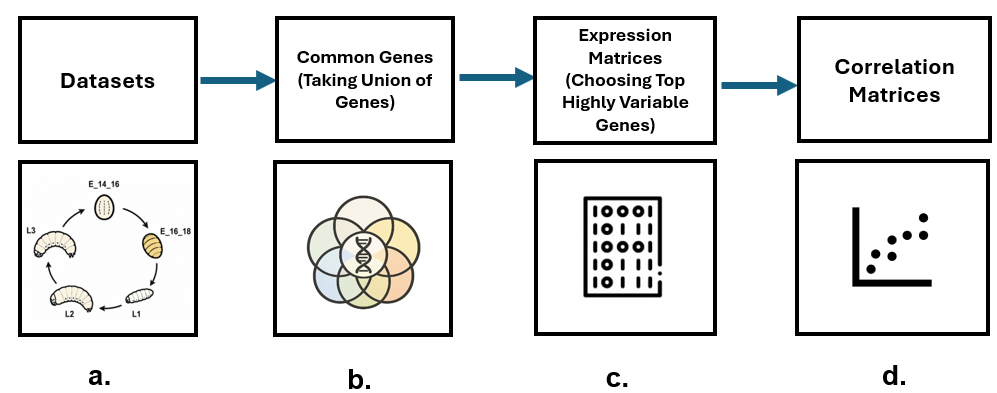}
    \caption{Flow chart on the steps involved for pre-processing the \textit{Drosophila} dataset. }
    \label{fig:flow_chart}
\end{figure}

 \subsection{Datasets and preprocessing for gene co-expression} \label{sec:data preprocessing}
\begin{figure}[!htpb]
    \centering
    
    \begin{minipage}{0.45\linewidth}
        \centering
        \includegraphics[width=\linewidth]{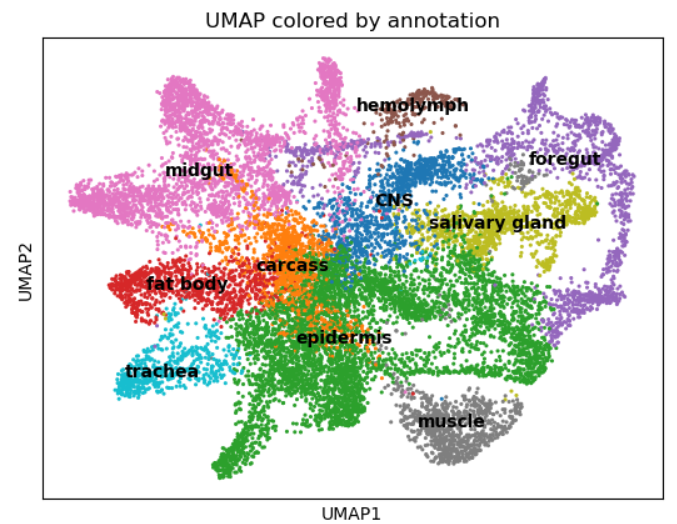}
        \caption{Cluster plot obtained via Uniform Manifold Approximation and Projection (UMAP) on \textit{Drosophila} gene data. Clusters are associated with tissue type.}
        \label{fig:ffUMAP}
    \end{minipage}
    \hfill
    \begin{minipage}{0.45\linewidth}
        \centering
        \includegraphics[width=\linewidth]{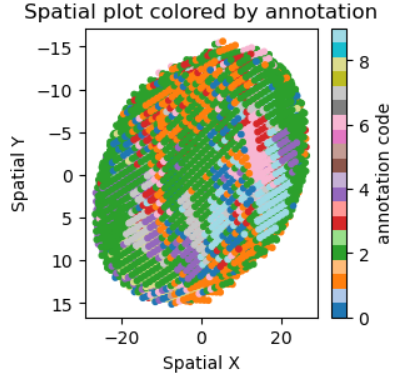}
        \caption{Plot of spatial locations of tissue samples from \textit{Drosophila} data, color coded by tissue type.}
        \label{fig:ffSpatial}
    \end{minipage}
\end{figure}
This study uses spatially resolved single-cell transcriptomic data from \textit{Drosophila} covering five developmental stages (E14--16h, E16--18h, L1, L2, and L3), generated using Stereo-seq \cite{wang2025drosophila}. The data provide gene expression measurements at single-cell or spot resolution together with spatial coordinates, enabling joint analysis of transcriptional variation and spatial organization across development (see Figures \ref{fig:ffUMAP} and \ref{fig:ffSpatial} for a low-dimensional view of the dataset). Each tissue section is associated with a unique slice ID, ensuring that all observations within a section share a consistent spatial reference frame. We now describe the pre-processing steps needed to construct the graphs which serve as inputs for the GW geodesics and measure hypernetworks which serve as inputs for the COOT distance. A summary figure on the intermediate steps is shown in Figure \ref{fig:flow_chart}. We refer to Section \ref{sec:add_data_prepos} for a detailed information on the additional pre-processing in the generation of the gene-gene correlation matrices for subsequent learning tasks such as geodesics inference and classification. 
Throughout, graphs are handled numerically using the networkx package in Python (see \cite{hagberg2008exploring}), and Gromov-Wasserstein-related computations are handled using functions in the Python Optimal Transport library (see \cite{flamary2021pot}). To handle some Ollivier-Ricci curvature computations, we additionally use code from \cite{Unfolding_the_m_Gosztolai2021}.

\paragraph{Gene selection.}
To construct a consistent feature space across stages, we first identify the top 50 highly variable genes to reduce the dimensionality of the dataset within each developmental stage based on expression variance across cells or spots \cite{dasgupta2025scanpy}. The final gene set is obtained by taking the union of these stage-specific lists. This ensures that genes showing strong variability in at least one developmental stage are retained, while enforcing a fixed set of genes across all stages for comparison.

\paragraph{Gene co-expression networks.}
For each developmental stage, we restrict the expression matrix to the unified gene set and compute pairwise Pearson correlation coefficients between genes. This produces a stage-specific correlation matrix encoding co-expression relationships.

To construct a sparse gene interaction network, we retain only the strongest correlations (top 5\% in absolute value) and set all remaining entries to zero. The resulting matrix is interpreted as a weighted adjacency matrix of a gene co-expression graph. Each correlation matrix was thresholded by retaining only entries whose magnitude exceeded a threshold, $\mu$; that is, entries satisfying $|a_{ij}| \geq \mu$ were preserved. We fix the threshold hyperparameter $\mu=0.2$ throughout the experiments. A systematic selection procedure based on spectral graph theory and maximizing ``nearly disconnected" components can be found in \cite{perkins2009threshold}, though adapting threshold selection to curvature-based analysis is left for future investigation. To avoid issues arising from negative weights in geometric constructions, the correlation values are linearly rescaled from $[-1,1]$ to $[0,1]$ following \cite{Graph_Curvature_Sandhu2015}. Since all stages use the same gene set and ordering, the resulting graphs are directly comparable across developmental time.

\paragraph{Hypernetwork representation and COOT inputs.}
We refer to the resulting stage-wise weighted gene graphs as hypernetwork representations, in the sense that they encode structured higher-order dependencies induced by gene co-expression rather than independent pairwise interactions alone.
For Co-Optimal Transport (COOT) (see, \cite[Algorithm 1]{Redko_2020} for computational details), we additionally incorporate the original gene expression matrices together with spatial coordinates. This yields, for each stage, a coupled representation consisting of gene expression profiles and spatial organization. COOT then operates on these coupled structures to simultaneously align gene-level organization and spatial transcriptomic structure across developmental stages.

\subsection{Temporal ordering and classification of gene expression networks}\label{subsec:classification}

Using the distributions of edge-wise Ollivier-Ricci curvature in both the GW geodesics and observed \textit{Drosophila} data, we want to use curvature to determine the relative ordering of the measured data using the methods described in \Cref{sec:geodesicclassification} and \Cref{subsubsec:realcurvaturecomp}. The \textit{Drosophila} data is processed using the methods described in \Cref{sec:data preprocessing} and \Cref{sec:add_data_prepos}. Specifically, the method described in \Cref{sec:data preprocessing} obtains five gene expression networks, one for each developmental stage, where the initial and terminal networks are used to construct 40 Gromov-Wasserstein geodesics that are used to perform rank ordering as described in \Cref{sec:geodesicclassification} as well as function as the class prototypes for the classification procedure described in \Cref{subsubsec:realcurvaturecomp}, while the data preprocessing described in \Cref{sec:add_data_prepos} yields 50 gene expression networks corresponding to different spatial slices that function as test data for both the rank ordering algorithm and the classification algorithm.

We first present the results of the rank ordering prediction algorithm from \Cref{sec:geodesicclassification}. The results for the two best-performing methods (Wasserstein and KL divergence) are shown in \Cref{fig:KLclassifiation} and \Cref{fig:Wclassification}. Overall, the the KL divergence and Wasserstein methods recover the right qualitative trend in the data, as indicated by the high Kendall and Spearman rank correlations of $\tau = 0.9016$ and $\rho = 0.9528$ for the KL divergence method and $\tau = 0.8889$ and $\rho = 0.9398$ for the Wasserstein method. A visualization for the method using mean curvature as the discrepancy measure, whose performance was subpar compared to the Wasserstein and KL divergence methods, is shown in \Cref{fig:meangeoclass} in \Cref{appendix:suppfigs}. These results suggest that the edge-wise Ollivier-Ricci curvature distributions of Gromov-Wasserstein geodesics provide descriptive information about the developmental stages and reveal the overall developmental trajectory, illustrating that the curvature distributions of interpolated metric measure spaces obtained via Gromov-Wasserstein geodesics can predict the relative ordering of \textit{Drosophila} developmental stages, including intermediate stages between the initial and terminal points.

These results reflect similar findings in \cite{baptista2024}, which also successfully recovered the temporal ranking of time series data using Ricci flow. In a similar fashion to our geodesic construction, Ricci flow trajectories were constructed between an initial and a terminal phase of gene expression networks. This Ricci flow trajectory model successfully recovered the temporal ordering of embryonic stem cell differentiation at six time points and human myoblast differentiation at eight time points. Likewise, the experimental results in \cite{Graph_Curvature_Sandhu2015} established that cancerous networks can be identified based on their higher average curvature compared to normal cellular networks. Our rank ordering predictions using the Ollivier-Ricci curvature distributions of Gromov-Wasserstein geodesics further support the use of OR curvature as a discriminative biological marker. Furthermore, our ability to recover these markers using only interpolation via Gromov-Wasserstein geodesics between initial and terminal gene expression networks suggests that Gromov-Wasserstein geodesics can function as effective models of real biological data if intermediate temporal measurements are unavailable.
\begin{figure}[!htpb]
    \centering
    \begin{subfigure}[b]{\textwidth}
        \centering
        \includegraphics[scale=0.4]{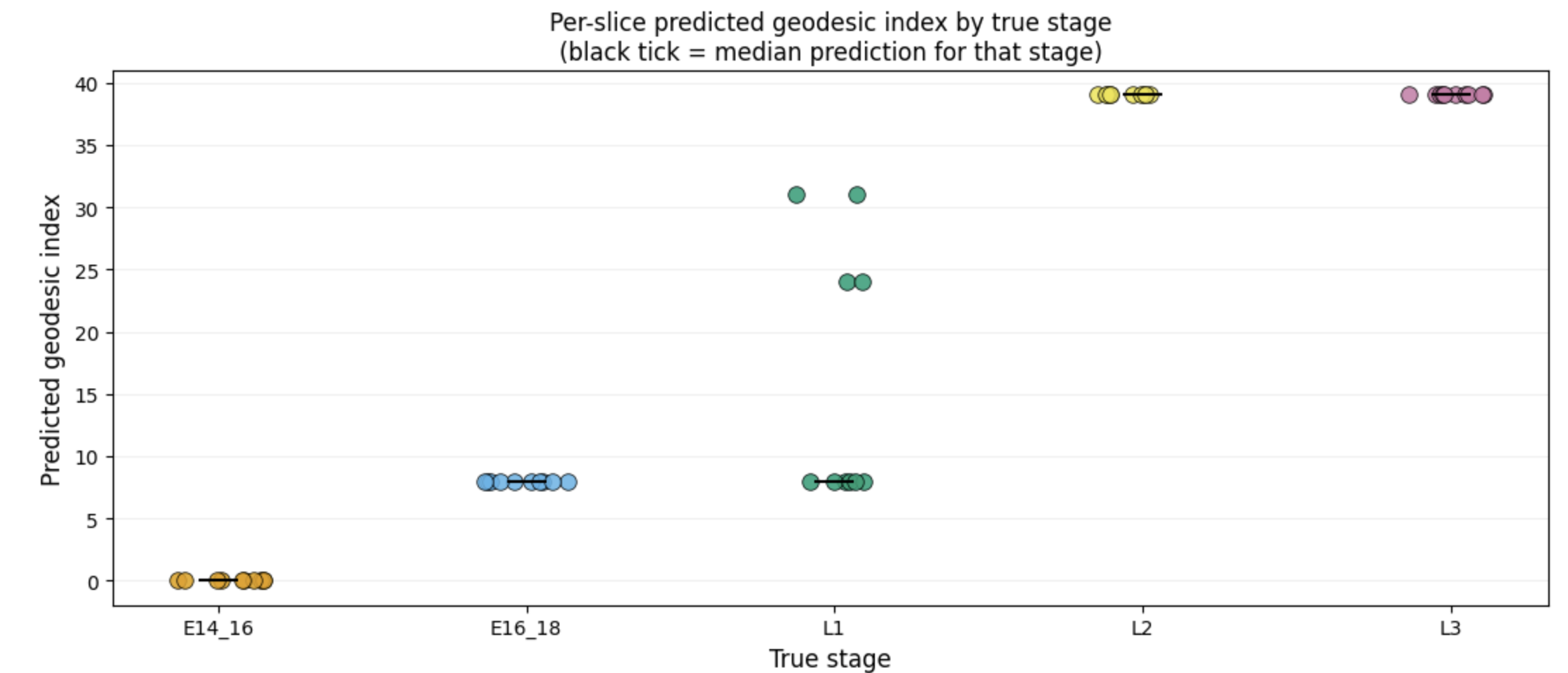}
        \caption{Rank ordering results for KL divergence method.}
        \label{fig:KLclassifiation}
    \end{subfigure}

    \begin{subfigure}[b]{\textwidth}
        \centering
        \includegraphics[scale=0.4]{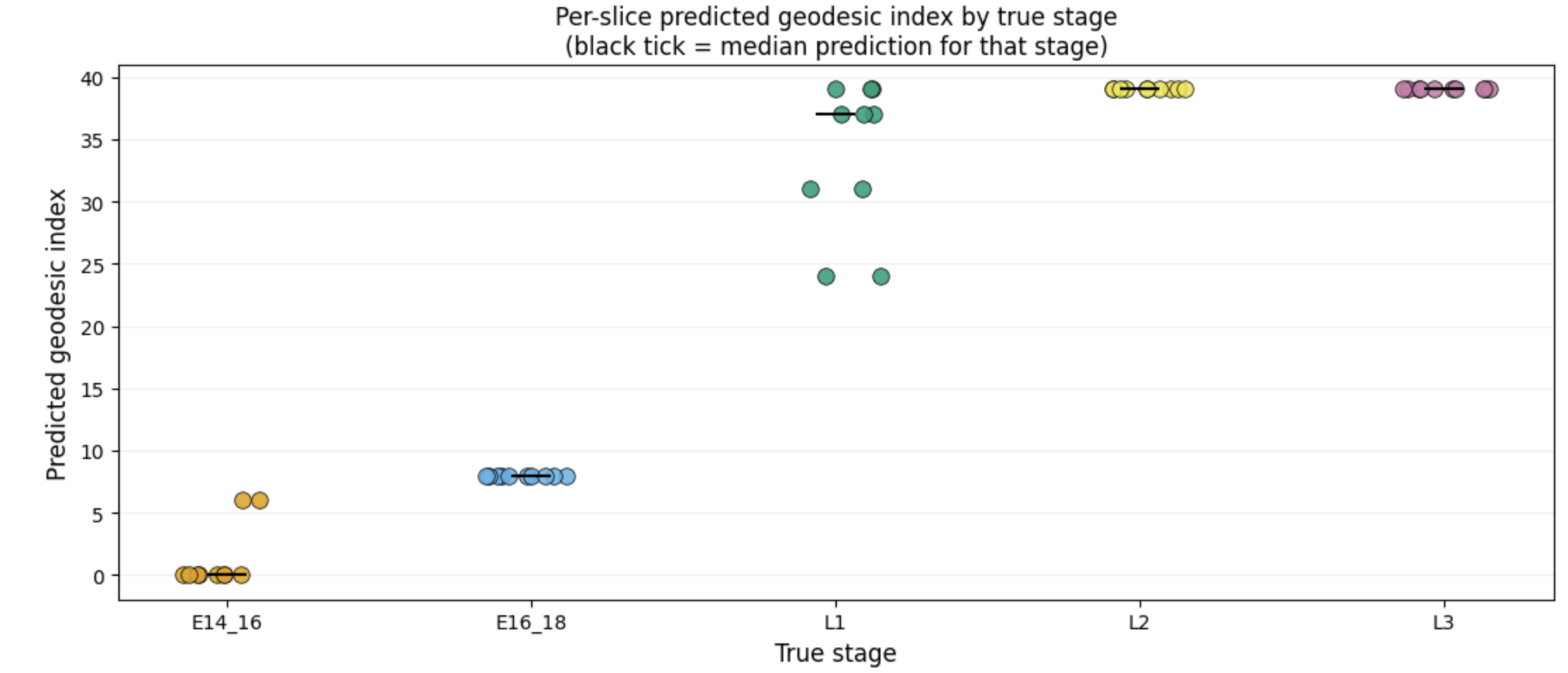}
        \caption{Rank ordering results for Wasserstein method.}
        \label{fig:Wclassification}
    \end{subfigure}
    \caption{Rank ordering classification results for the KL divergence and Wasserstein methods.}
    \label{fig:classification}
\end{figure}

We next present the classification results from the methods described in \Cref{subsubsec:realcurvaturecomp}, shown in \Cref{tab:classificationresults}. The Wasserstein method obtained perfect classification accuracy, correctly classifying all spatial slices into the correct developmental stage. In \Cref{fig:discrepancy}, the per-slice, per-reference-stage discrepancy values for the Wasserstein and KL divergence methods are visualized as a heatmap, showing the full decision surface to indicate not only how each slice was classified but also how close other discrepancy measures were. We show the plots for the mean and median methods in \Cref{fig:meanmeddiscrep} in \Cref{appendix:suppfigs} (with the mean method shown in \Cref{subfig:meandiscrepplot} and the median method shown in \Cref{subfig:meddiscrepancy}). The discrepancy values for the Wasserstein method shown in \Cref{fig:Wassdiscrepancy} indicate that all slices were safely classified in their correct developmental stage. For KL divergence, shown in \Cref{fig:KLdiscrepancy}, we can see that the three incorrectly-classified slices were still close in discrepancy measure to the correct classification. Both plots also indicate that the class with the next smallest discrepancy measure after the closest one is often temporally adjacent to the correct class. For example, in \Cref{fig:Wassdiscrepancy}, we can see that for the E14\_16 spatial slices, which were all correctly classified as class E14\_16, class E16\_18 is the next smallest in Wasserstein discrepancy, while discrepancy measurements in the larval stages L1, L2, and L3 are much larger, suggesting that temporally adjacent stages may have more similar curvature distributions compared to earlier or later stages. Overall, these results demonstrate that using the curvature distributions of real prototypes for each developmental stage has strong predictive power to classify gene expression networks into their correct temporal stage.
\begin{table}[!htpb]
    \centering
    \caption{Classification results by comparing the curvature distributions with the common gene space gene expression networks for each developmental stage to the spatial slices at each developmental stage.}
    \label{tab:classificationresults}
    \begin{tabular}{l|l|l|l|l}
    \toprule
    Method        & Accuracy & Number Correct & Kendall $\tau$ & Spearman $\rho$ \\
    \midrule
    Wasserstein   & 100.00\% & 50             & 1.0000         & 1.0000          \\
    KL Divergence & 94.00\%  & 47             & 0.9735         & 0.9867          \\
    Scalar Mean   & 75.00\%  & 37             & 0.7098         & 0.8073          \\
    Scalar Median & 60.00\%  & 30             & 0.3162         & 0.4617   \\      
    \bottomrule
    \end{tabular}
\end{table}
\begin{figure}[!htpb]
    \centering
    \begin{subfigure}[b]{0.48\textwidth}
        \centering
        \includegraphics[scale=0.45]{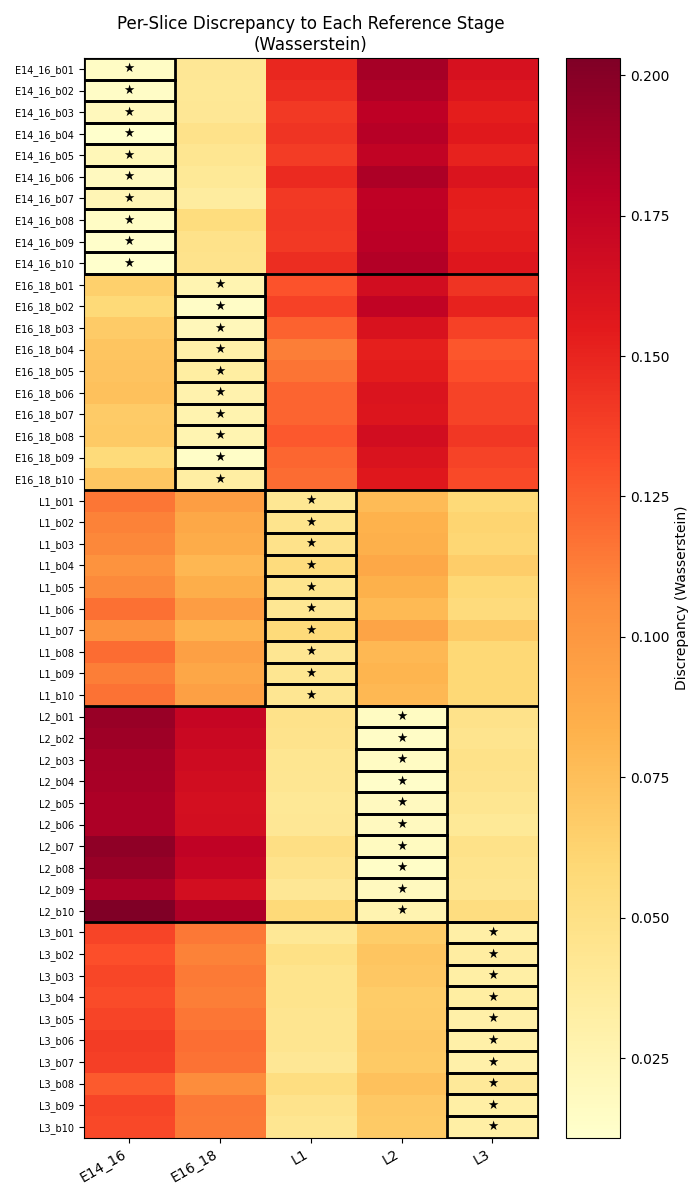}
        \caption{Wasserstein.}
        \label{fig:Wassdiscrepancy}
    \end{subfigure}
    \begin{subfigure}[b]{0.48\textwidth}
        \centering
        \includegraphics[scale=0.45]{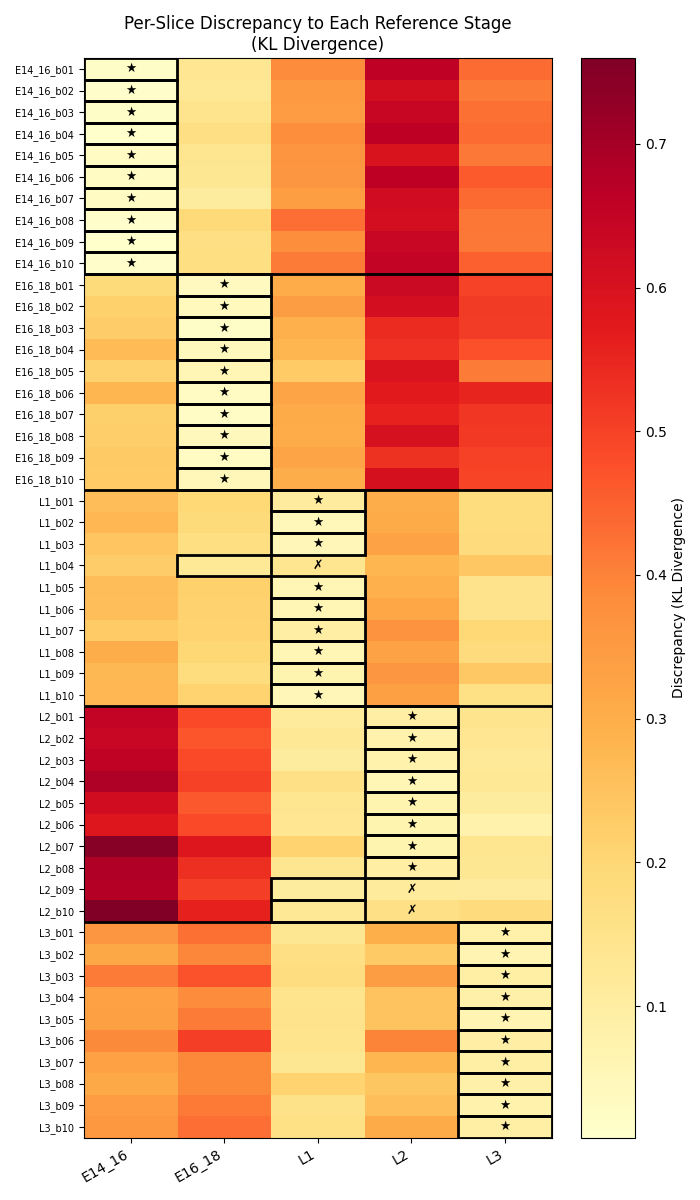}
        \caption{KL divergence.}
        \label{fig:KLdiscrepancy}
    \end{subfigure}
    \caption{Plot of per-slice, per-reference-stage discrepancy values for the Wasserstein and KL divergence classification methods. Correctly-classified slices are marked with a star symbol and a black box, while incorrectly-classified slices are denoted with an x symbol, with a black box around the class into which it was incorrectly classified.}
    \label{fig:discrepancy}
\end{figure}

\subsection{Developmental trends using GW Geodesics and Ollivier--Ricci curvature}\label{subsec: geodesics and curvature}

Using the data pre--processing detailed in \cref{sec:data preprocessing}, we apply the method detailed in \cref{subsection: graph approximations} using parameters $\gamma = 10^4$ and $\delta  = \frac{1}{4}$. To justify the validity of the method, we first compute the geodesic curvatures between each developmental stage of the fruit fly individually (e.g. E\_14\_16 to E\_16\_18, then E\_16\_18 to L1, and so forth). Then, we compare these individual results to computing the geodesic curvature between the first embryo stage and the final larval stage. In effect, this procedure compares the real temporal trajectory of the fruit fly's development with the interpolated path between the initial and final temporal stages.

\Cref{fig:gene networks 0.2 threshold} visualizes these gene correlation networks after applying this threshold; for a visualization of the heatmaps of these networks, see \Cref{fig:correlations} in \Cref{appendix:suppfigs}. We consider barycenters corresponding to $t\in \{0, \frac{1}{10}, \frac{2}{10}, \dots , 1\}$ along each interstage geodesic and computer their curvatures, with results shown in \cref{fig:fruit fly 0.2 graph threshold}.  We then compute a stitched trajectory by joining the interstage geodesics and rescaling them according to the reported development times (15 hours, 17 hours, 24 hours, 49 hours and 73 hours) \cite{tyler2000developmental} for stages E14-16, E16-18, L1, L2, and L3, respectively. This relative scaling yields a total of 41 barycenters, from which we compute the average curvature along the stitched trajectory from E14-16 and Larvae 3, shown in \cref{fig: stiched geodesic overlayed}. 

\begin{figure}[!htpb]
    \centering
    \begin{subfigure}{0.3 \linewidth}
        \includegraphics[width=\linewidth]{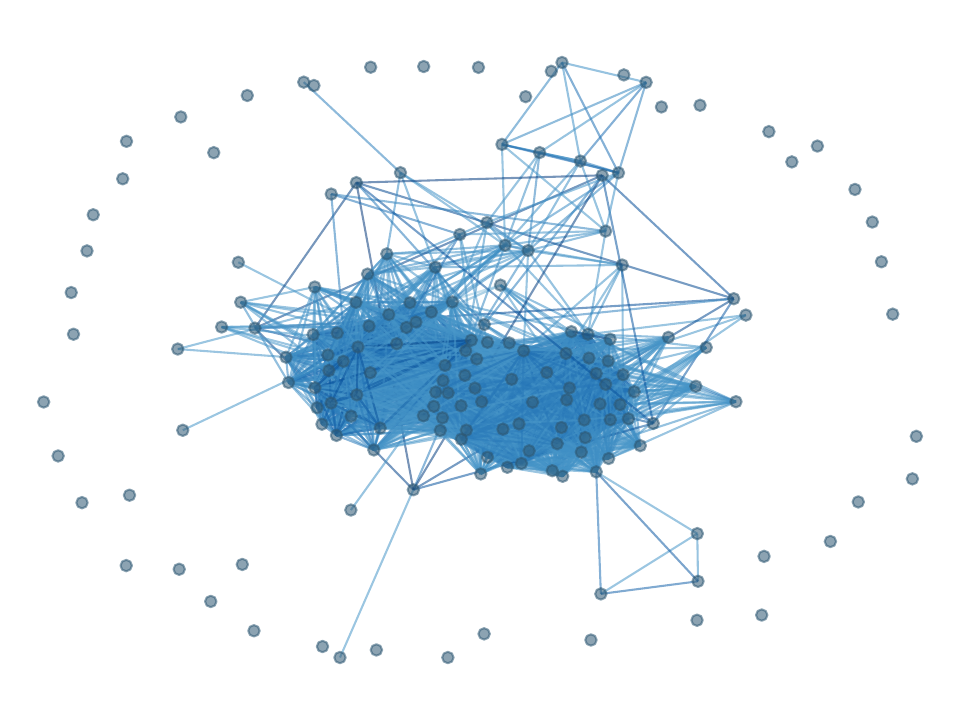}
        \caption*{E14-16}
    \end{subfigure}
    \begin{subfigure}{0.3 \linewidth}
        \includegraphics[width=\linewidth]{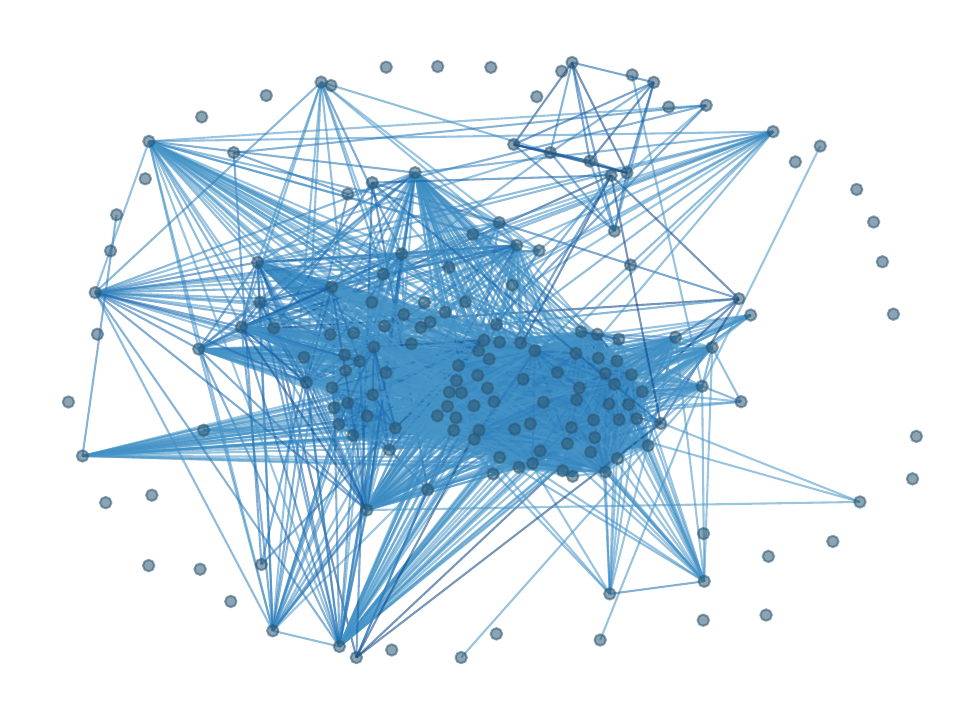}
        \caption*{E16-18}
    \end{subfigure}
    \begin{subfigure}{0.3 \linewidth}
        \includegraphics[width=\linewidth]{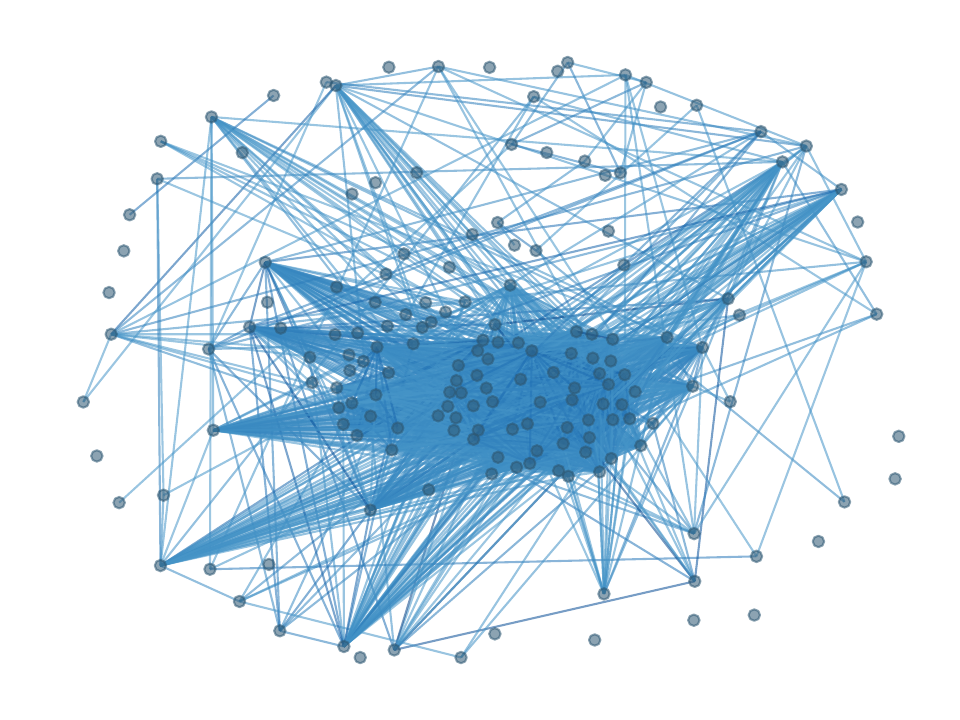}
        \caption*{L1}
    \end{subfigure}
    \\
    \begin{subfigure}{0.3 \linewidth}
        \includegraphics[width=\linewidth]{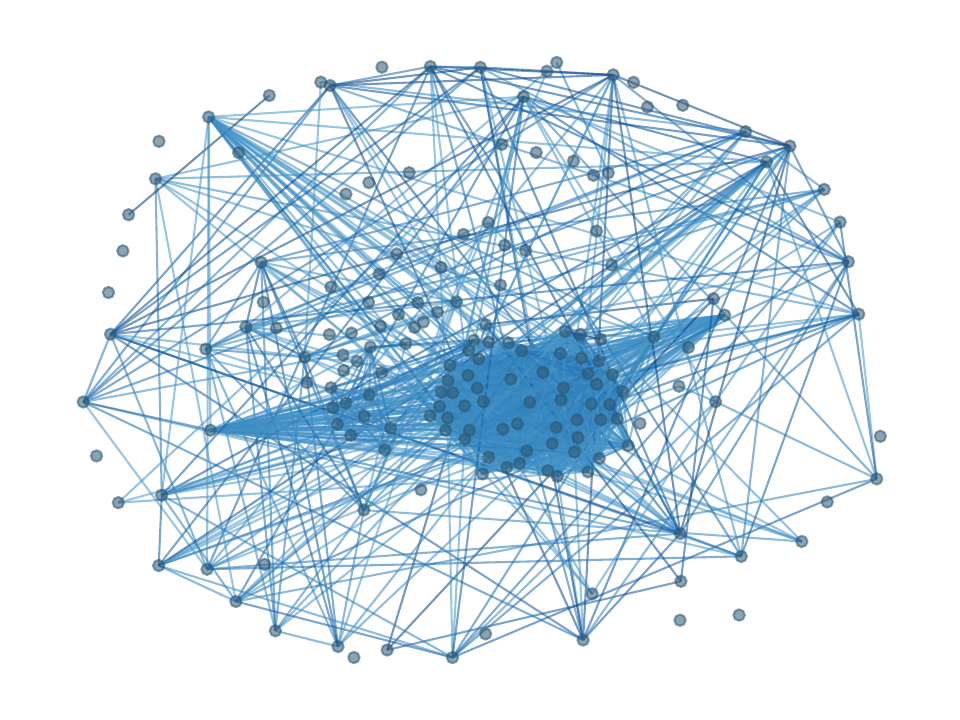}
        \caption*{L2}
    \end{subfigure}
    \begin{subfigure}{0.3 \linewidth}
        \includegraphics[width=\linewidth]{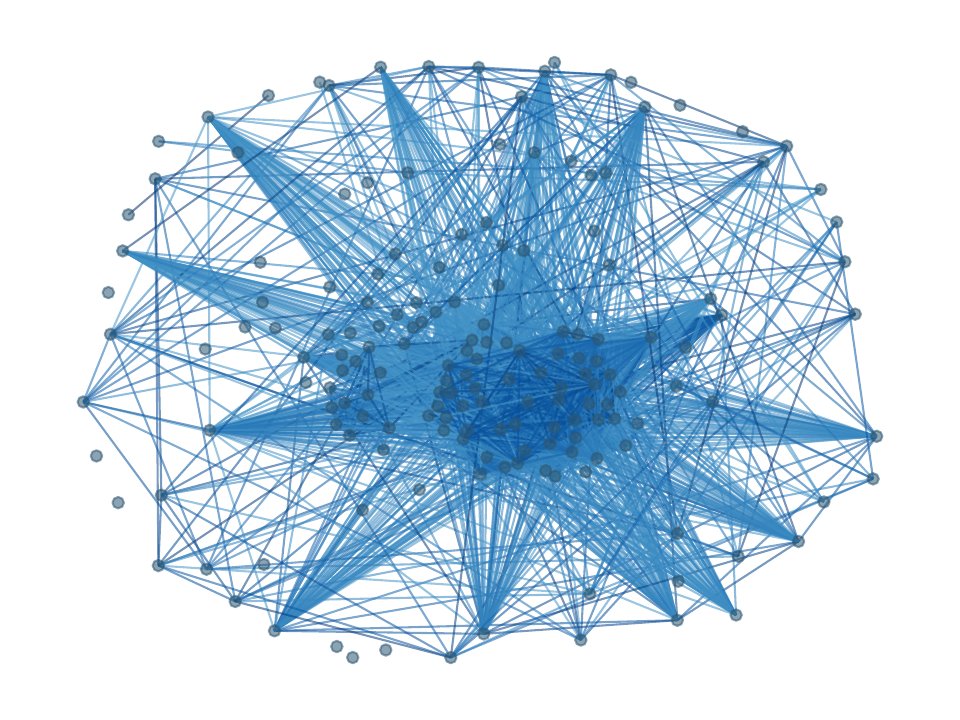}
        \caption*{L3}
    \end{subfigure}
    \caption{Comparison of gene expression networks with a thresholding of $\mu = 0.2$.}
    \label{fig:gene networks 0.2 threshold}
\end{figure}

{ To evaluate the predictive performance of the endpoint GW geodesic in comparison (i) outlines in \Cref{subsection: trend analysis with OR curvature}, the relative $l_1$ error is computed to measure the overall discrepancy between the observed and geodesic-based curvatures:
\begin{equation*}
\frac{
\sum_{s=1}^{5}
\left|
\widetilde{\kappa}_{G_s}
-
\widetilde{\kappa}_{G_B^{w,t_s}}
\right|
}{
\sum_{s=1}^{5}
\left|
\widetilde{\kappa}_{G_s}
\right|
} \times 100\% = 6.24\%.
\end{equation*}
The values for these average edgewise OR curvatures are listed in \cref{tab: average OR curvature values} in \cref{appendix:suppfigs}. Additionally, to assess the extent to which the GW-geodesic-based curvatures reflect the observed variation across the five development stages, we also compute the $R^2$ score

\begin{equation*}
R^2
=
1-
\frac{
\sum_{s=1}^{5}
\left(
\widetilde{\kappa}_{G_s}
-
\widetilde{\kappa}_{G_B^{w,t_s}}
\right)^2
}{
\sum_{s=1}^{5}
\left(
\widetilde{\kappa}_{G_s}
-
\frac{1}{5}\sum_{r=1}^{5}\widetilde{\kappa}_{G_r}
\right)^2
} = \fpeval{round(0.375138393776545, 3)}.
\end{equation*}
Hence, across the five developmental stages, the GW-geodesic-based predictions reduce the square error by $37.5\%$  relative to the constant across-stage mean predictor $\frac{1}{5}\sum_{r=1}^{5}\widetilde{\kappa}_{G_r}$. 
}

 The average-curvature trajectory along the endpoint GW geodesic from E14--16 to Larvae 3 is then compared with that along the piecewise stitched GW path in \cref{fig: stiched geodesic overlayed}, addressing comparison (ii) from \Cref{subsection: trend analysis with OR curvature}. The average-curvature trajectory along the piecewise GW path appears visually similar to that along the endpoint GW geodesic, which does not use any intermediate stages. This suggests that endpoint GW geodesic can potentially serve as a useful approximation to the developmental average-curvature trajectory, and more broadly, may capture information about the developmental progression reflected in the curvature networks.  This visual observation is assessed numerically using the sign assignment procedure described in Equation \eqref{eq: sign assignment} and \cref{eq: relaxed sign assignment - previous}, with $n=41$ pseudo-time points. The scores take values in $[0,1]$, with values closer to $1$ indicating stronger sign alignment.  As reported in \Cref{tab: Sign Assigments}, the sign-alignment score is $0.72$, substantially above the chance baseline of $0.5$ under independent, equally likely positive and negative curvature changes. 
 To reduce the influence of the small oscillations potentially due to noise, the relaxed sign alignment scores are also reported, with $\epsilon$ chosen as a percentage of the maximum variation. For each $\epsilon$, the score is compared with the empirical relaxed chance baseline (cf. \Cref{lm: relaxed chance baseline} in \Cref{appendix:relaxed baseline}):
 \begin{equation*}
\widehat{C}_\epsilon
=
1-\frac{1}{2}
\bigl(1-\widehat{p}_{f,\epsilon}\bigr)
\bigl(1-\widehat{p}_{g,\epsilon}\bigr),
\end{equation*}
where
\begin{equation*}
\widehat{p}_{f,\epsilon}
=
\frac{1}{n-1}
\sum_{i=1}^{n-1}
\mathbb{I}\bigl(|\Delta f_i|<\epsilon\bigr),
\qquad
\widehat{p}_{g,\epsilon}
=
\frac{1}{n-1}
\sum_{i=1}^{n-1}
\mathbb{I}\bigl(|\Delta g_i|<\epsilon\bigr),
\end{equation*}
and $f,g:\mathbb{R}\to\mathbb{R}$ denote two scalar
trends sampled at $t_1<\cdots<t_n$.
For example, when $\epsilon=0.008$, the relaxed sign-alignment score is $S_\epsilon=0.82$, which is higher than the corresponding empirical chance baseline $\widehat{C}_\epsilon=0.73$ and indicates directional consistency between the endpoint geodesic and the more refined piecewise trajectory.

\begin{table}
\centering

\caption{Sign Assignment {($S$)} and Relaxed Sign Assignment { ($S_\epsilon$)} comparing the geodesic from E14-16 to L3  and the piecewise stitched geodesic for \cref{subsec: geodesics and curvature} with $\epsilon$ scaled to 5.0\% ($\epsilon = \fpeval{round(0.004096011001264638,4)} )$ and 10.0\% of maximum variation ($\epsilon = \fpeval{round(0.008192022002529276,4)}$).}
\label{tab: Sign Assigments}

\begin{tabular}{lrr}
\toprule
 & Score &  \quad \quad Chance baseline \\
\midrule
$S$ & 0.72 & 0.50 \\
$S_\epsilon \, (\epsilon = \fpeval{round(0.004096011001264638,4)})$ 
& 0.75 & \fpeval{round(0.6609375,2)}  \\
$S_\epsilon \, (\epsilon = \fpeval{round(0.008192022002529276,4)})$ 
& 0.82 &  0.73 \\
\bottomrule
\end{tabular} 
\end{table}

\begin{figure}[!htbp]
    \centering
\includegraphics[width=0.6\linewidth]{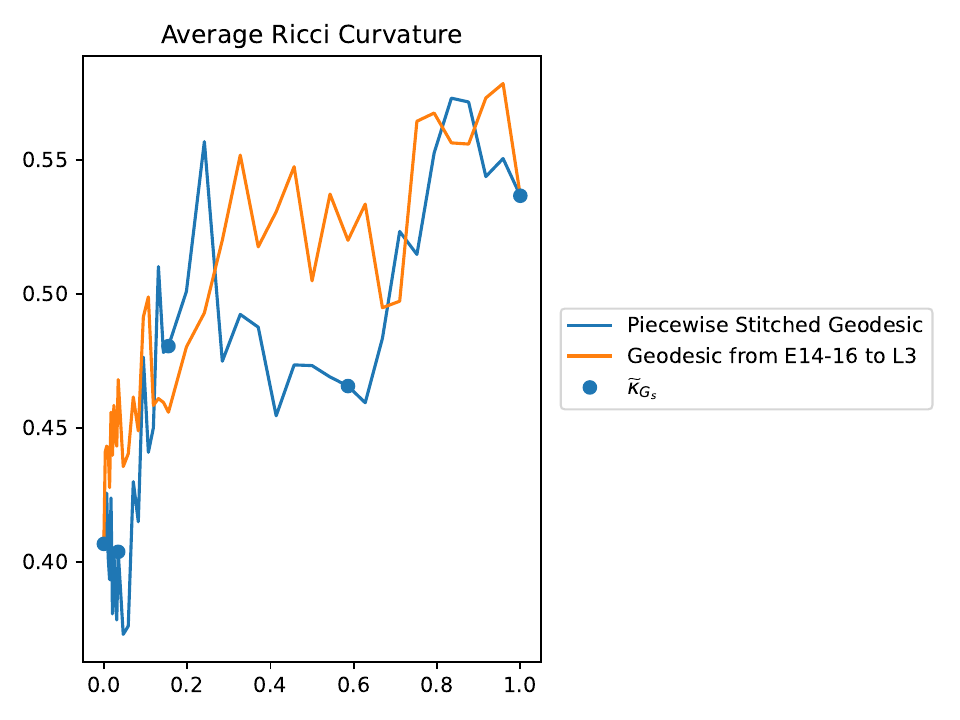}
    \caption{Average OR curvatures for the geodesics computed between E14-16 and Larvae 3 and the stitched geodesic as detailed in \cref{subsec: geodesics and curvature}. $\widetilde{\kappa}_{G_s}$, $s=1,\ldots,5$, denote the observed weighted graphs at the five developmental stage. }
    \label{fig: stiched geodesic overlayed}
\end{figure}

\subsection{Developmental trends using dynamic Ollivier--Ricci curvature}\label{subsec:dynamic-curvature-results}
 
In this section we  compare the distributions of edge-wise distance-scaled dynamic OR curvature \eqref{eq:distance-scale curvature} associated with the five observed stage-specific networks as described in \Cref{sec:dynamic-curvature-distributions}. Because the retained edge sets differ across stages, we compare empirical curvature distributions rather than matched edges. The largest raw 1-Wasserstein discrepancy, { denoted by $\delta_W$}, among consecutive stages depends on diffusion scale $\tau$. L1 to L2 has the largest $\delta_W$ at 12 of the 19 sampled values of $\tau$ ($0.01$--$0.681$), while L2 to L3 is largest at the remaining seven ($1$--$10$). After averaging over $\log_{10}(\tau)$ (cf. \eqref{eq:log avg W1} ), L1 to L2 also has the largest raw discrepancy, $\delta_W^{\textrm{avg}}$. In contrast, after median centering and interquartile-range scaling, the resulting discrepancy, denoted by $\delta_{W, cs}$, is largest for L2 to L3  at all 19 values of $\tau$. Its averaged centered/scaled discrepancy, $\delta_{W,\mathrm{cs}}^{\mathrm{avg}}$, is $0.353$,  compared with at most $0.244$ for the other consecutive transitions. Table \ref{tab:dev-curvature} summarizes these results.

\begin{table}[!htbp]
    \centering
    \small
    \caption{Dynamic OR curvature comparisons for consecutive developmental stages. The Wasserstein discrepancies are averaged over $\log_{10}(\tau)$. The third and fifth columns report the number of the 19
    sampled diffusion parameters at which each transition has the largest corresponding  discrepancies,  $\delta_{W}$ and $\delta_{W,\mathrm{cs}}$ respectively,  among the four consecutive transitions.}
    \label{tab:dev-curvature}
    \begin{tabular}{lcccc}
        \toprule
        Transition
        & {\shortstack{Raw\\$\delta_W^{\textrm{avg}}$}}
  
           & {\shortstack{$\tau$ counts with largest\\$\delta_{W}$}}
        & \shortstack{Centered/scaled\\$\delta_{W,\mathrm{cs}}^{\mathrm{avg}}$}
        & {\shortstack{$\tau$ counts with largest\\$\delta_{W,\mathrm{cs}}$}} \\
        \midrule
        E14--16 to E16--18 & 0.125 & 0/19  & 0.133 & 0/19 \\
        E16--18 to L1      & 0.158 & 0/19  & 0.193 & 0/19 \\
        L1 to L2           & 0.195 & 12/19 & 0.244 & 0/19 \\
        L2 to L3           & 0.092 & 7/19  & 0.353 & 19/19 \\
        \bottomrule
    \end{tabular}
\end{table}

\begin{figure}[!htbp]
    \centering
    \includegraphics[width=\textwidth]{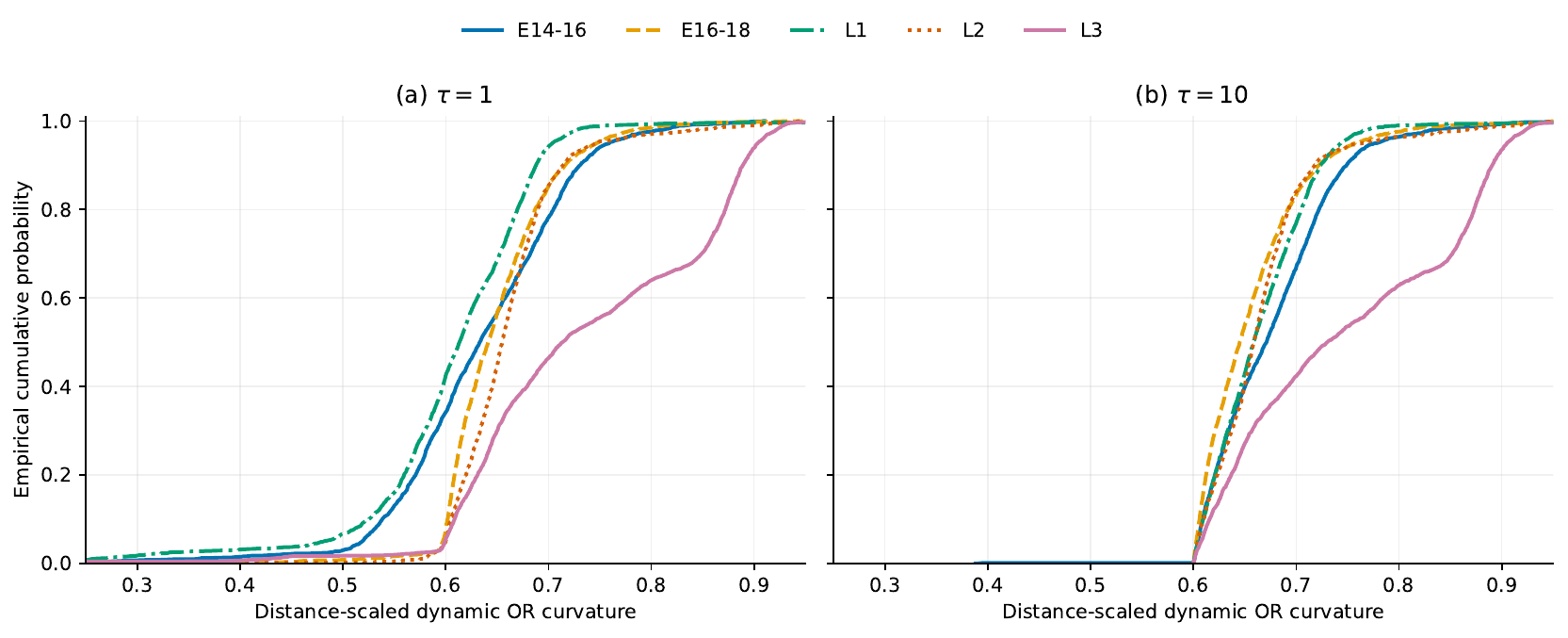}
    \caption{Empirical cumulative distribution functions of the edge-wise distance-scaled dynamic OR curvature values for the five observed developmental-stage networks at $\tau=1$ and $\tau=10$. The common x-axis range $[0.25,0.95]$ contains the central 98\% of the curvature values for each developmental stage at both $\tau=1$ and $\tau=10$.}
    \label{fig:dynamic-curvature-ecdf-main}
\end{figure}

At larger diffusion scales, L3 is consistently separated from the earlier stages. For every sampled diffusion scale from $\tau=1$ through $\tau=10$, each raw 1-Wasserstein ($\delta_W$) comparison involving L3 is larger than every comparison among E14--16, E16--18, L1, and L2. The ECDFs in Figure \ref{fig:dynamic-curvature-ecdf-main} show that L3 also has the largest interquartile range at $\tau=1$ and $\tau=10$. This pattern is compatible with the larger COOT distances involving L3 in \Cref{subsec:coopt_hypernetworks}. These results indicate that dynamic OR curvature distributions capture developmental-stage differences across diffusion scales, complementing the curvature-based classification results in \Cref{subsec:classification}.

{ A possible hypothesis for the larger-$\tau$ behavior (cf. column 3 of \Cref{tab:dev-curvature}) is that widespread transcriptional changes during L3 induce co-expression changes across broader portions of the gene network. This is consistent with the significant remodeling and widespread shift in gene-expression profiles reported by Wang et al.~\cite{wang2025drosophila} in their study of the \textit{Drosophila} midgut; see also \Cref{subsec:coopt_hypernetworks}. Since larger diffusion scales incorporate co-expression changes across broader neighborhoods \cite{Unfolding_the_m_Gosztolai2021}, they may better reflect the structural discrepancy between L2 to L3, in line with this transition having the largest distance among the four consecutive transitions both  by $\delta_W$ at the seven largest sampled $\tau$'s and by COOT. On the other hand,  smaller diffusion scales emphasize more local network organization, and hence the L1 to L2 discrepancy may reflect stronger changes in local connectivity, in line with this transition having the largest $\delta_W$ at the 12 smallest $\tau$'s. 

Another hypothesis, in terms of distribution shifts between the transitions, is that for smaller diffusion scales their resulting $\delta_W$ discrepancy is driven largely by the change in distribution supports, particularly affine changes in location and  spread;  see \Cref{fig:tau 0.1 OR hist,fig:tau 1 OR hist} in \Cref{appendix:suppfigs} for a qualitative illustration of this contrast: support differences are more apparent at the smaller scale $\tau=0.1$, whereas the supports largely overlap at the larger scale $\tau=1$. This may have contributed to L1 to L2 having larger raw $\delta_W$ than L2 to L3 at the 12 smallest $\tau$'s, while having a smaller $\delta_{W,cs}$.  However, the persistence of the L2 to L3 discrepancy  after centering and scaling (cf. column 5 of \Cref{tab:dev-curvature}) also suggests that this difference is not explained solely by shifts in the location or spread of the curvature distributions.

}

\subsection{Co-Optimal transport for measure hypernetworks}\label{subsec:coopt_hypernetworks}
\begin{figure}[!htbp]
    \centering
    
    \begin{subfigure}[b]{0.48\textwidth}
        \centering
        \includegraphics[width=\linewidth]{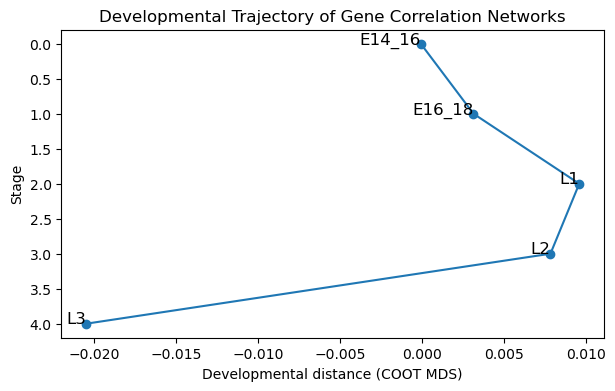}
        \caption{Developmental trajectory of \textit{Drosophila} across different stages.}
        \label{fig:development_trajectory}
    \end{subfigure}
    \hfill
    \begin{subfigure}[b]{0.48\textwidth}
        \centering
        \includegraphics[width=\linewidth]{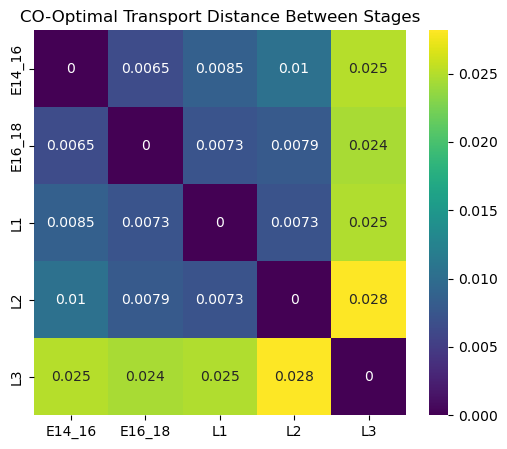}
        \caption{Pairwise COOT distance between developmental stages.}
        \label{fig:heatmap_coot}
    \end{subfigure}
    
    \caption{
    Comparison of developmental-stage relationships inferred from gene correlation networks. 
    (a) COOT-based multidimensional scaling (MDS) embedding of developmental stages (E14–16, E16–18, L1, L2, L3), illustrating the trajectory of network evolution over time. Early embryonic stages cluster closely, whereas later stages, particularly L3, show a larger shift in network structure. 
    (b) Heatmap of pairwise CO-Optimal Transport (CO-OT) distances between developmental stages. Lower values indicate greater similarity, revealing gradual transcriptional changes from E14–16 through L2 and a more pronounced divergence at L3.
    }
    \label{fig:trajectory_heatmap}
\end{figure}

The developmental trajectory in Figure~\ref{fig:development_trajectory} is obtained by applying multidimensional scaling (MDS) \cite{kruskal1964nonmetric} to the pairwise $\textrm{COOT}_p$ distances defined in \eqref{eq:COOT} with $p=2$ computed between stage-specific cell--gene interaction matrices, represented as measure hypernetworks. The resulting one-dimensional embedding summarizes the structural similarity of spatial transcriptomic organization across developmental stages. The close proximity of the late embryonic stages (E14--16 and E16--18) indicates highly similar cell--gene interaction patterns, suggesting relatively stable spatial and molecular organization during late embryogenesis. The gradual progression through the early larval stages (L1 and L2) reflects continuous remodeling of the joint cellular and molecular architecture. In contrast, the marked separation of the L3 stage indicates substantially greater structural divergence, consistent with extensive reorganization of cell--gene interactions accompanying the transition toward metamorphosis. Because COOT jointly aligns the cell and gene domains, the trajectory exhibits changes in the global organization of spatial transcriptomic profiles rather than simple differences in expression levels.

Figure~\ref{fig:heatmap_coot} presents the pairwise COOT distance matrix between stage-specific cell--gene interaction matrices. The relatively small distances among the late embryonic and early larval stages (E14--16, E16--18, L1, and L2; approximately 0.006--0.010) indicate that the overall spatial transcriptomic organization remains highly conserved throughout these developmental stages. In contrast, the substantially larger distances involving L3 (approximately 0.024--0.028) suggest a marked structural divergence from the earlier stages, pointing to a major reorganization of cell--gene interactions during late larval development. Together, the heatmap and MDS embedding reveal a developmental trajectory characterized by gradual structural changes from embryogenesis through the early larval stages, followed by a distinct transition at L3. 

As highlighted in \cite{wang2025drosophila}, the development of \textit{Drosophila} can be understood as a gradual process where changes in genes, cells, and tissue structures occur together over time. During embryonic development, tissue organization is established step by step, as cells receive signals that guide them into different roles and locations within the developing stage of the \textit{Drosophila}. As development progresses, cells become increasingly organized into specialized tissues and structures. By the end of embryogenesis, many early cell types and tissue structures have begun to form, providing a foundation for further development during larval stages. Early larval development continues the refinement and maturation of these tissues as cells adjust their functions during growth. In contrast, the later larval stage, particularly L3, represents a major transition as the organism prepares for metamorphosis. During this stage, cells undergo substantial changes in their activity and developmental state, accompanied by a widespread shift in gene-expression profiles. Consistent with this developmental pattern, the COOT-derived embedding of the stage-specific cell–gene heatmap in Figure~\ref{fig:heatmap_coot} shows a gradual progression through embryonic and early larval stages, while L3 forms a clearly separated group, reflecting the major developmental changes occurring before metamorphosis.

\section{Discussion}\label{sec:discussion}

In this work, we developed a geometric framework to study developmental trajectories in the \textit{Drosophila melanogaster} spatial transcriptomic dataset by representing gene expression patterns as spatially and temporally varying co-expression networks. Using tools from optimal transport, including Gromov--Wasserstein (GW) geometry and Ollivier--Ricci (OR) curvature, we investigated how network structure changes during development.

Our experiments lead to three main conclusions. First, OR curvature distributions can distinguish between gene expression networks collected from different spatial locations and at different time points. { The dynamic OR curvature analysis further provides a scale-dependent view of developmental differences.} This supports prior work in this area regarding the efficacy of OR curvature in recovering biological information from gene expression networks while also incorporating both temporal and spatial components. Second, even in the absence of intermediate time points, curvature distributions along GW geodesics recover developmental trends observed in real data. This suggests that GW geodesics provide meaningful interpolations between gene expression networks and can reveal aspects of developmental progression when temporal measurements are incomplete. Third, by examining developmental changes using measure hypernetworks and Co-Optimal Transport (COOT), the resulting COOT embedding exhibited a clear progression from embryonic to larval stages, with relatively small differences among early stages and a distinct separation of the L3 stage. By jointly aligning genes and cells, COOT retains higher-order relationships that are not represented in pairwise graph comparisons, allowing it to detect broader structural changes associated with late larval development.

Several limitations should be noted. First, our gene co-expression networks rely on correlation-based thresholding following \cite{Graph_Curvature_Sandhu2015}, making network topology sensitive to choices such as correlation thresholds, gene selection, and treatment of negative correlations. Future work should evaluate alternative network inference strategies and biologically informed interaction networks. Second, Gromov--Wasserstein optimization remains computationally challenging for large-scale transcriptomic datasets, motivating further algorithmic improvements. 

Future extensions include incorporating biologically informed edge weights from regulatory interactions, ligand--receptor communication, and multi-omics data \cite{bunne2024optimal}, exploring the OR curvature notion for hypergraphs \cite{murgas2022hypergraph}, and exploring unbalanced such as conic variants of GW and COOT to accommodate changes in cell populations and gene regulations \cite{oliver2025conic}. More broadly, combining curvature-based descriptors with higher-order transport representations may provide a unified framework for linking molecular interactions, cellular organization, and developmental dynamics.

Overall, this study demonstrates that GW-based geometric methods provide a natural framework for analyzing spatiotemporal transcriptomic data beyond conventional distribution alignment. The agreement between empirical developmental patterns, GW interpolations, curvature signatures, and COOT embeddings highlights that OT geometry encodes biologically meaningful information across multiple scales, offering new opportunities for quantitative analysis of developmental and cellular processes.

\section*{Data and code availability}
The raw data analyzed in this study are publicly available from \url{https://www.cell.com/cell/fulltext/S0092-8674(25)00629-4}. The processed data and code used to reproduce the computational results and figures are available through GitHub at  \url{https://github.com/TuyentdTran/A-Unified-Geometric-Framework-for-Developmental-Analysis-of-Spatial-Transcriptomic-Data}.

\section*{Acknowledgments}
The authors acknowledge the support of the National Science Foundation (NSF) Grant No. DMS-2520375 via the 2025 Research Collaboration Workshop in the Science of Data and Mathematics (WiSDM) held at the University of North Carolina at Chapel Hill on August 4-8, 2025. The authors also acknowledge Jannatul Chhoa as a participant in the project during the August 2025 WiSDM workshop. K. Hohmeier is supported by the NSF Graduate Research Fellowship Program under Grant No. DGE-2040435.  MC. Antony Oliver acknowledges the receipt of funding obtained from the Health Data Research UK-The Alan Turing Institute 
Wellcome (218529/Z/19/Z) and the Cambridge Trust scholarship from 
the Commonwealth European and International Trust (CCEIT). C. Moosm\"{u}ller and S. Li were partially supported by NSF award DMS-2410140. The funders had no role in study design, data collection and analysis, decision 
to publish, or preparation of the manuscript.

\bibliographystyle{plainnat}
\bibliography{refs.bib}

\appendix

\section{Supplementary Figures and Tables }\label{appendix:suppfigs}
\begin{figure}[!htbp]
    \centering
    \includegraphics[scale=0.45]{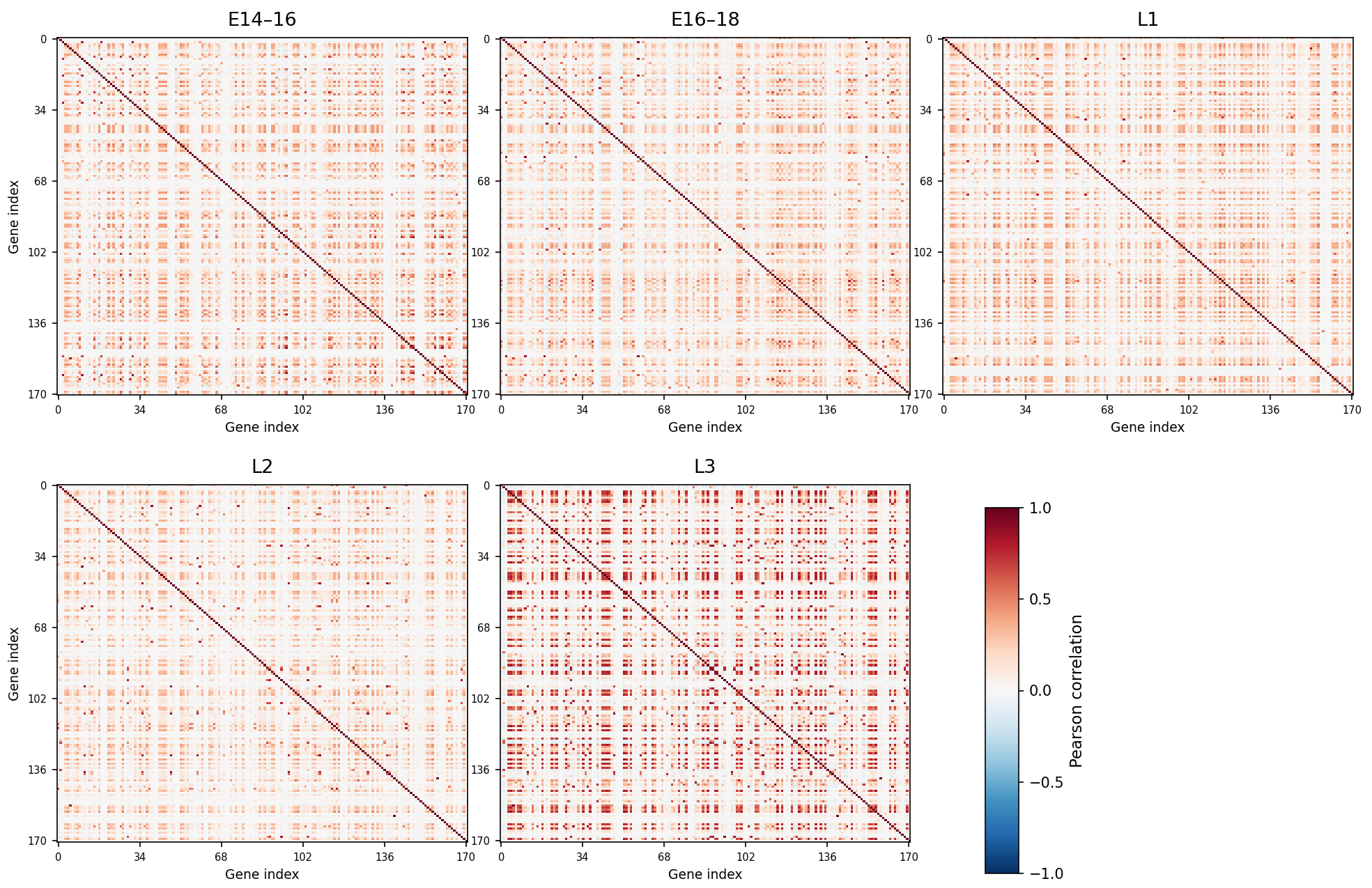}
    \caption{Heatmap of pairwise gene correlations across all \textit{Drosophila} developmental stages (E14-16, E16-18, L1, L2, L3), using a union of common genes across all five life stages. Other than large changes in correlation values at L3, the correlation values as well as the structure of the gene relationships at each time step appear visually similar.}
    \label{fig:correlations}
\end{figure}
\begin{figure}[!htpb]
    \centering
    \includegraphics[scale=0.4]{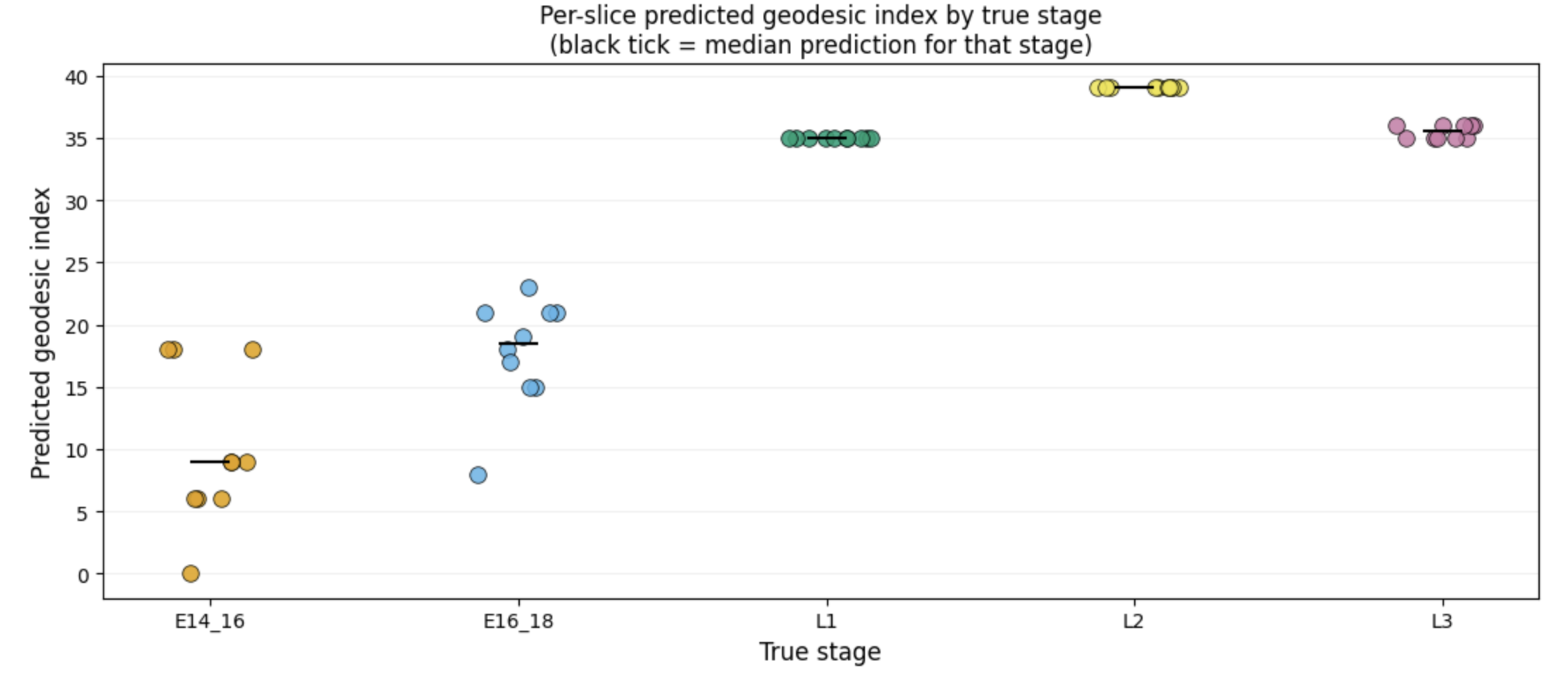}
    \caption{Rank ordering results for the mean method.}
\label{fig:meangeoclass}
\end{figure}

\begin{figure}[!htpb]
    \centering
    \begin{subfigure}[b]{0.48\textwidth}
        \centering
        \includegraphics[scale=0.45]{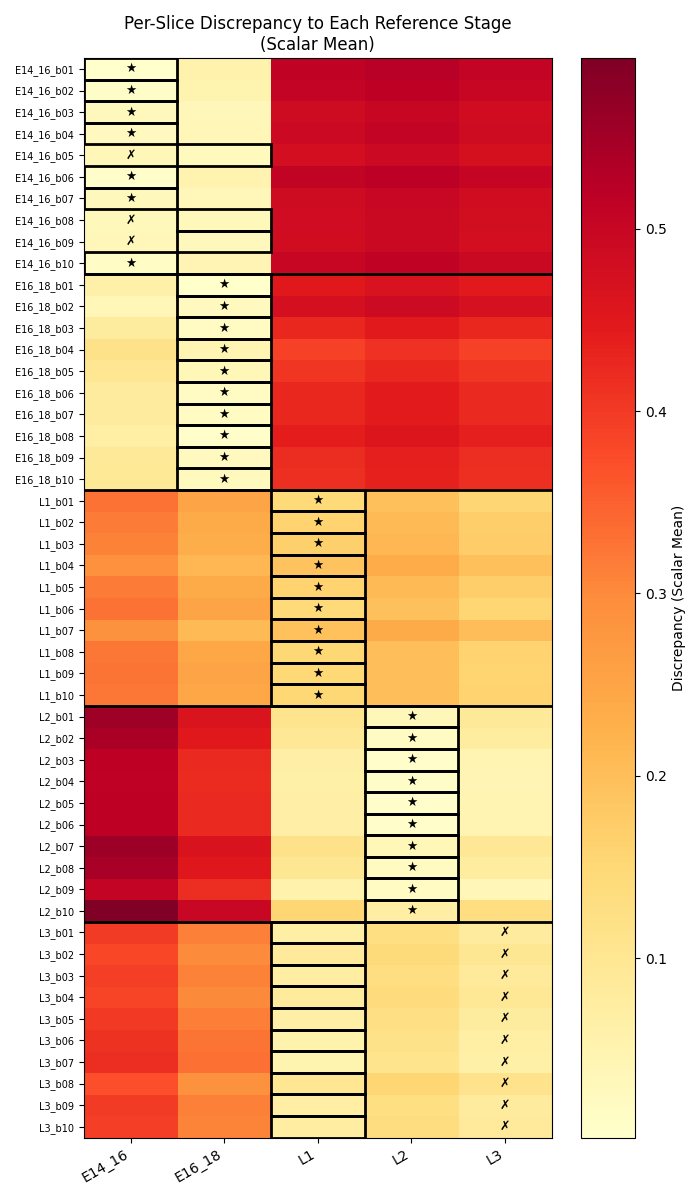}
        \caption{Mean.}
        \label{subfig:meandiscrepplot}
    \end{subfigure}
    \begin{subfigure}[b]{0.48\textwidth}
        \centering
        \includegraphics[scale=0.45]{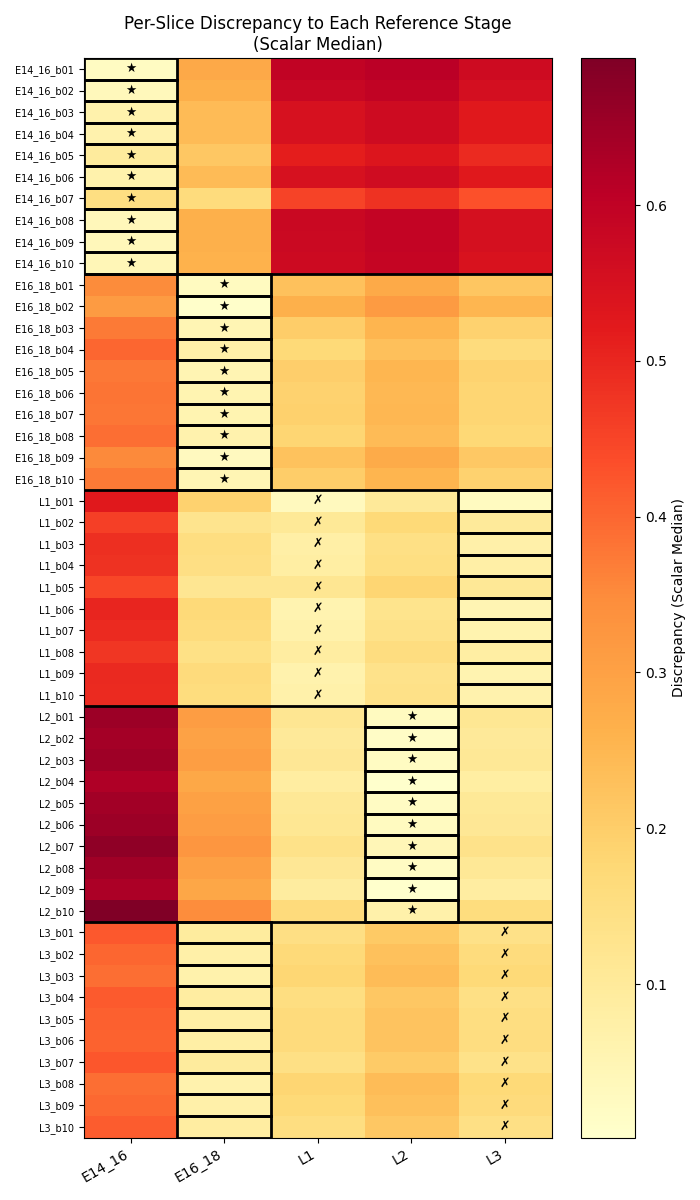}
        \caption{Median.}
        \label{subfig:meddiscrepancy}
    \end{subfigure}
    \caption{Plot of per-slice, per-reference-stage discrepancy values for the mean and median classification methods. Correctly-classified slices are marked with a star symbol and a black box, while incorrectly-classified slices are denoted with an x symbol, with a black box around the class into which it was incorrectly classified.}
    \label{fig:meanmeddiscrep}
\end{figure}

\begin{figure}[!htbp]
	\centering
	\begin{subfigure}{0.45\textwidth}
		\includegraphics[width=\linewidth]{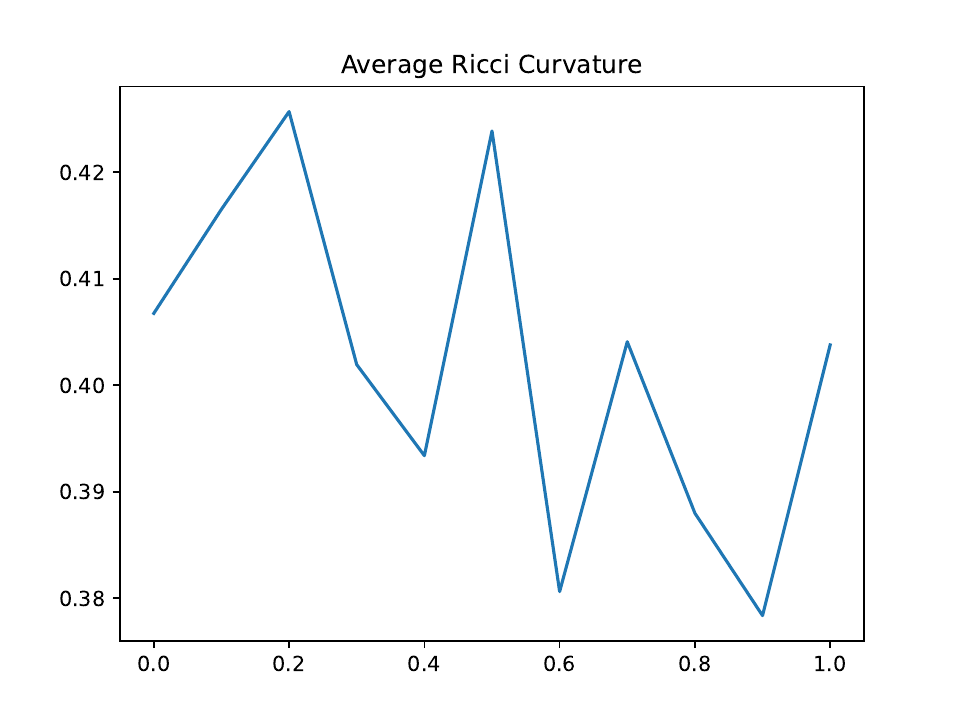}
		\caption{E14-16 to E16-18}
	\end{subfigure}
	\begin{subfigure}{0.45\textwidth}
		\includegraphics[width=\linewidth]{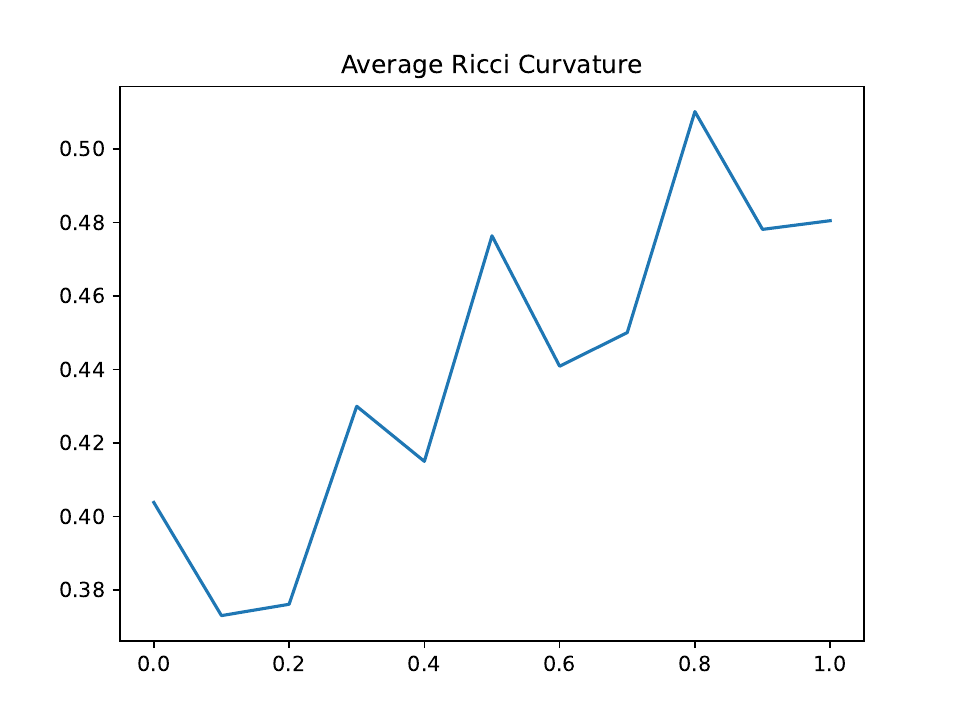}
		\caption{E16-18 to Larvae 1}
	\end{subfigure}
	\begin{subfigure}{0.45\textwidth}
		\includegraphics[width=\linewidth]{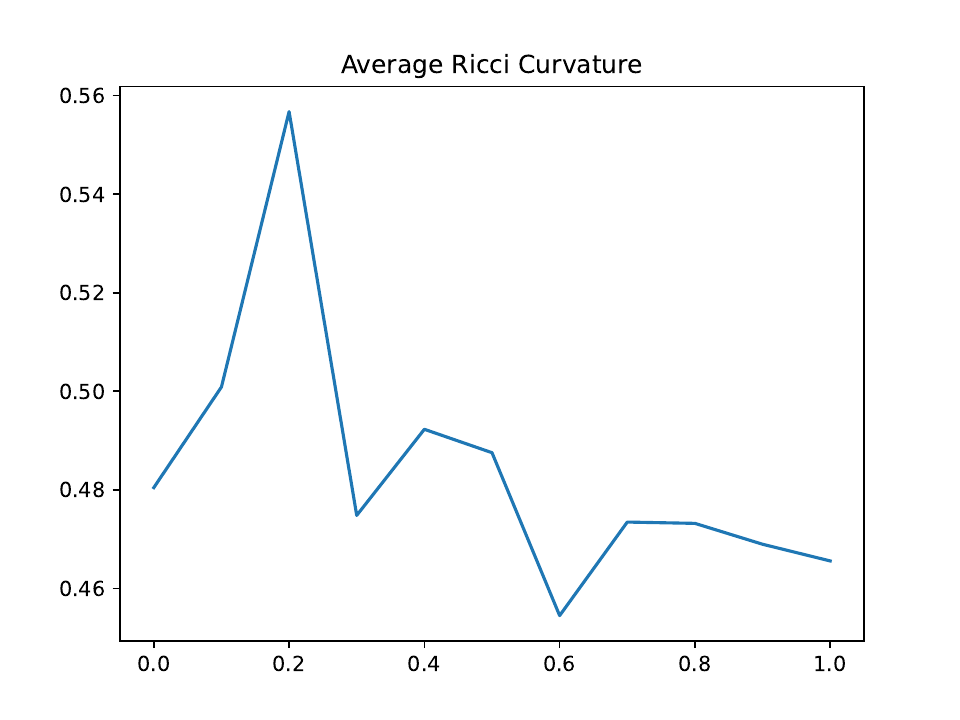}
		\caption{Larvae 1 to Larvae 2}
	\end{subfigure}
	\begin{subfigure}{0.45\textwidth}
		\includegraphics[width=\linewidth]{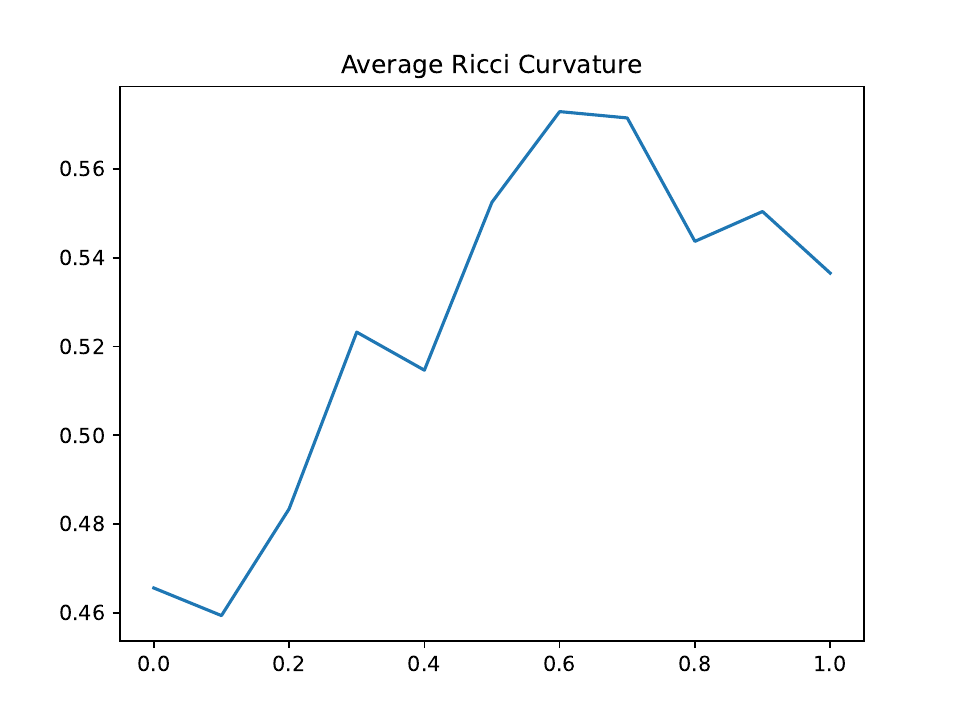}
		\caption{Larvae 2 to Larvae 3}
	\end{subfigure}

	\caption{Average OR curvatures for the geodesics between different stages of the fruit fly life spans as detailed in \cref{subsec: geodesics and curvature}. A gene network threshold of \(\mu =0.2\) was used. }
	\label{fig:fruit fly 0.2 graph threshold}
\end{figure}

\begin{table}[H]
\centering
\caption{Values for $\widetilde{\kappa}_{G_B^{w,t_s}}$ and $\widetilde{\kappa}_{G_s}$ in \cref{subsec: geodesics and curvature}.}
\label{tab: average OR curvature values}
\begin{tabular}{lrrrrr}
\toprule
 & $s=1$ & $s=2$ & $s=3$ & $s=4$ & $s=5$ \\
\midrule
$\widetilde{\kappa}_{G_s}$ & 0.4067 & 0.4038 & 0.4805 & 0.4656 & 0.5365 \\
$\widetilde{\kappa}_{G_B^{w,t_s}}$ & 0.4067 & 0.4680 & 0.4558 & 0.5199 & 0.5365 \\
\bottomrule
\end{tabular}
\end{table}

\begin{figure}[!htpb]
    \centering
    \includegraphics[scale=0.4]{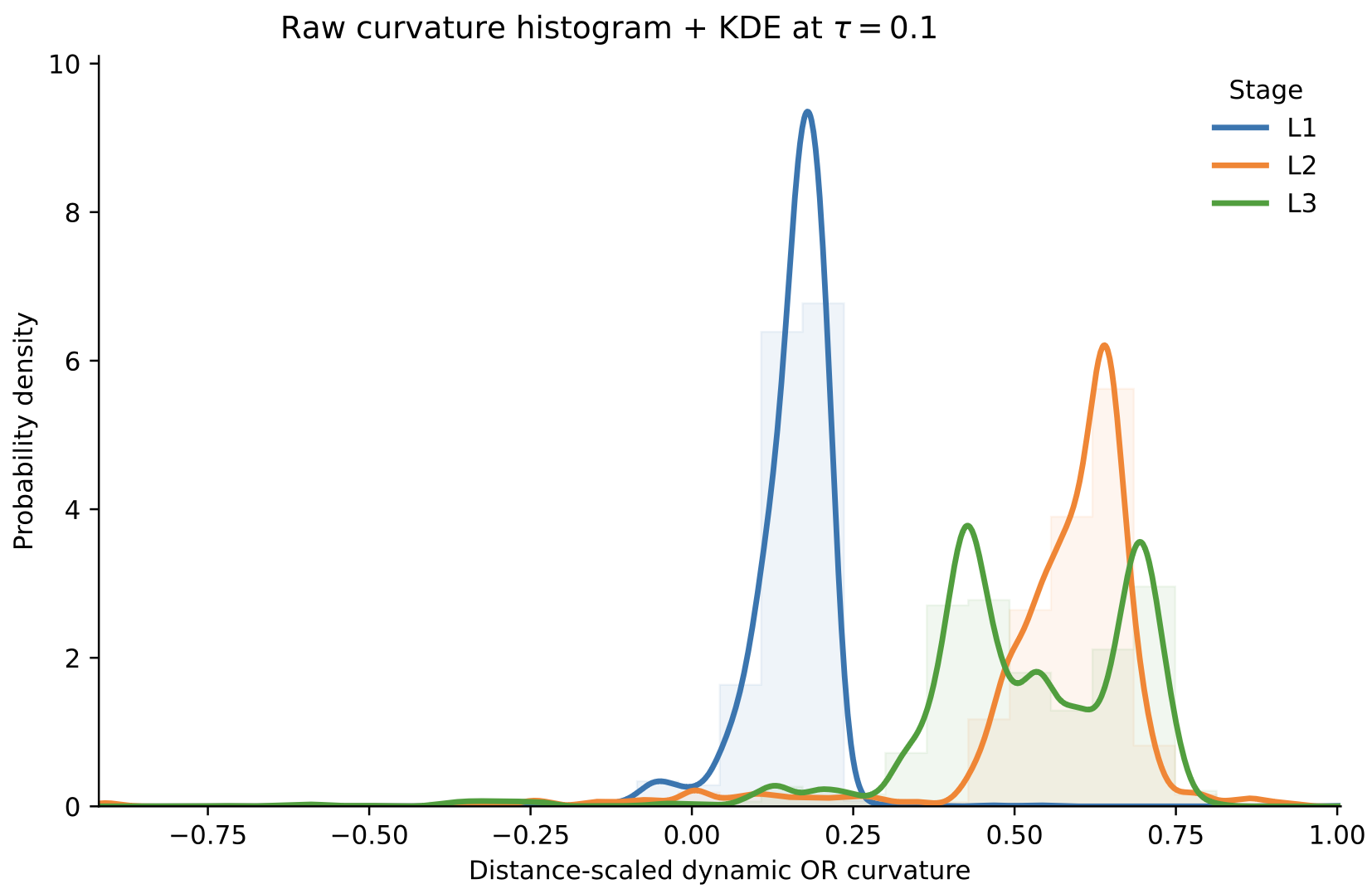}
    \caption{Histograms of distance-scaled dynamic OR curvature values $\kappa^{\mathrm{ds}}_{x,y}(\tau)$ with $\tau = 0.1$ for L1, L2, and L3. The kernel density estimates (KDEs) are included for visualization. }
\label{fig:tau 0.1 OR hist}
\end{figure}

\begin{figure}[!htpb]
    \centering
    \includegraphics[scale=0.4]{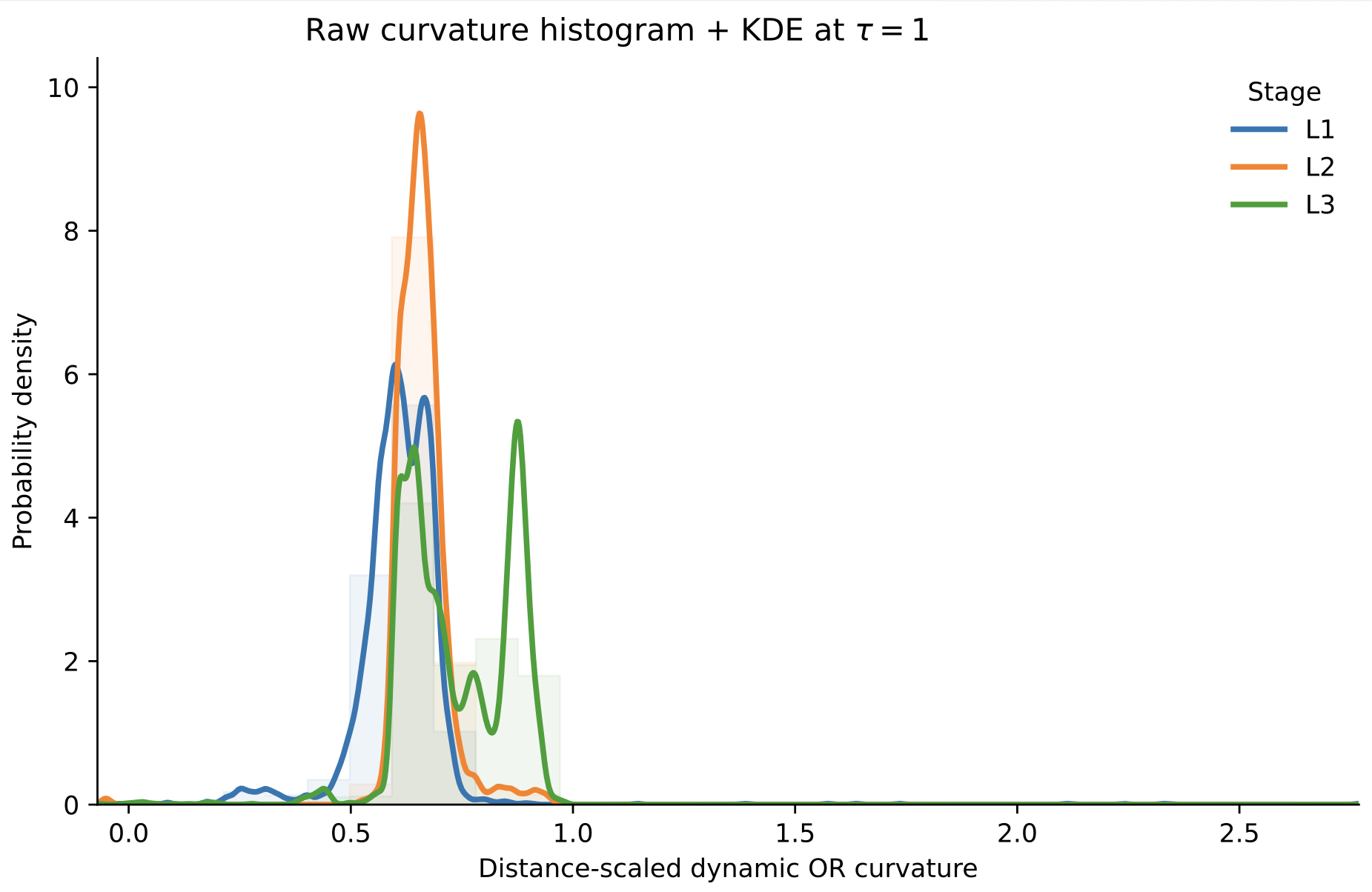}
    \caption{Histograms of distance-scaled dynamic OR curvature values $\kappa^{\mathrm{ds}}_{x,y}(\tau)$  with $\tau = 1$ for L1, L2, and L3. The kernel density estimates (KDEs) are included for visualization.}
\label{fig:tau 1 OR hist}
\end{figure}

\newpage
\section{Supplementary Details}
\subsection{Chance baseline for relaxed sign alignment}\label{appendix:relaxed baseline}

Let $f$ and $g$ be the scalar trends defined in \Cref{subsection: trend analysis with OR curvature}, with increments
\begin{equation*}
\Delta f_i=f(t_{i+1})-f(t_i),
\qquad
\Delta g_i=g(t_{i+1})-g(t_i),
\qquad
i=1,\ldots,n-1.
\end{equation*}

\begin{Lem}\label{lm: relaxed chance baseline}
Let $\Delta f$ and $\Delta g$ be independent random increments corresponding to two scalar trends $f$ and $g$. Assume that, conditional on having magnitude at least $\epsilon$, each increment is equally likely to be positive or negative. Define
\begin{equation*}
p_{f,\epsilon}
=
\mathbb{P}\bigl(|\Delta f|<\epsilon\bigr),
\qquad
p_{g,\epsilon}
=
\mathbb{P}\bigl(|\Delta g|<\epsilon\bigr).
\end{equation*}
Then the chance baseline for the relaxed sign-alignment score is
\begin{equation}
\label{eq: relaxed chance baseline}
C_\epsilon
=
1-\frac{1}{2}
\bigl(1-p_{f,\epsilon}\bigr)
\bigl(1-p_{g,\epsilon}\bigr).
\end{equation}
\end{Lem}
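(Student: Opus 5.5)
The plan is to compute $C_\epsilon$ as the probability of the event counted by the relaxed score \eqref{eq: relaxed sign assignment - previous} for a single pair of independent increments. By linearity of expectation, the expected score over the $n-1$ increments equals this single-step probability whenever the increments share a common distribution. So we fix one index $i$ and work with the event
\[
A=\{\operatorname{sign}(\Delta f)=\operatorname{sign}(\Delta g)\}\cup\{|\Delta f|<\epsilon\}\cup\{|\Delta g|<\epsilon\}.
\]

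It is easiest to compute the complement. We have $A^c = \{|\Delta f|\ge\epsilon\}\cap\{|\Delta g|\ge\epsilon\}\cap\{\operatorname{sign}(\Delta f)\neq\operatorname{sign}(\Delta g)\}$. On $\{|\Delta f|\ge\epsilon,\,|\Delta g|\ge\epsilon\}$ both increments are nonzero, since $\epsilon>0$. So their signs lie in $\{+1,-1\}$, and the sign mismatch is the event $\{\operatorname{sign}\Delta f=+,\operatorname{sign}\Delta g=-\}\cup\{\operatorname{sign}\Delta f=-,\operatorname{sign}\Delta g=+\}$.

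Independence of $\Delta f$ and $\Delta g$ gives
\[
\mathbb P(A^c)=\mathbb P(|\Delta f|\ge\epsilon)\,\mathbb P(|\Delta g|\ge\epsilon)\,\mathbb P\bigl(\operatorname{sign}\Delta f\ne\operatorname{sign}\Delta g \,\big|\, |\Delta f|\ge\epsilon,|\Delta g|\ge\epsilon\bigr).
\]
To evaluate the last factor, note that independence also makes the conditional laws factor: conditioned on both magnitudes being at least $\epsilon$, the two signs are independent. By hypothesis each sign is a fair $\pm1$ variable under this conditioning. The mismatch probability is therefore $\tfrac12\cdot\tfrac12+\tfrac12\cdot\tfrac12=\tfrac12$. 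Substituting $\mathbb P(|\Delta f|\ge\epsilon)=1-p_{f,\epsilon}$ and likewise for $g$ yields $\mathbb P(A^c)=\tfrac12(1-p_{f,\epsilon})(1-p_{g,\epsilon})$, and hence $C_\epsilon=\mathbb P(A)$ is given by \eqref{eq: relaxed chance baseline}.

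The only delicate step is this conditioning argument. We need that independence of $\Delta f,\Delta g$ passes to their signs given the product event $\{|\Delta f|\ge\epsilon\}\times\{|\Delta g|\ge\epsilon\}$. This holds because that event is a product of events, each depending on only one of the two variables.

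We also have to make precise what "chance baseline" means. We take it to mean $\mathbb E[S_\epsilon]$ under the stated model, with identically distributed increment pairs. The empirical version $\widehat C_\epsilon$ then follows by plugging in the empirical frequencies $\widehat p_{f,\epsilon},\widehat p_{g,\epsilon}$.

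As a sanity check, setting $\epsilon\to0$ (so $p\to0$) recovers the unrelaxed baseline $1/2$ reported in \Cref{tab: Sign Assigments}.
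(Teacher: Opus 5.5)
Your proof is correct and follows the paper's argument: you pass to the complement (both increments of magnitude at least $\epsilon$ with opposite signs), factor $\mathbb{P}(|\Delta f|\ge\epsilon,\,|\Delta g|\ge\epsilon)=(1-p_{f,\epsilon})(1-p_{g,\epsilon})$ by independence, use the conditional mismatch probability $\tfrac12$, and take the complement. Your extra steps (conditional independence of the signs given a product event, and reading the baseline as $\mathbb{E}[S_\epsilon]$ via linearity of expectation) make explicit what the paper leaves implicit; the one small slip is in your side remark, since as $\epsilon\to0$ you get $p_{f,\epsilon}\to\mathbb{P}(\Delta f=0)$, which vanishes only when there is no atom at zero.
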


\begin{proof}
Under the relaxed criterion, disagreement occurs only when both increments have magnitude at least $\epsilon$ and have opposite signs. By independence,
\begin{equation*}
\mathbb{P}\bigl(
|\Delta f|\geq\epsilon,\,
|\Delta g|\geq\epsilon
\bigr)
=
\bigl(1-p_{f,\epsilon}\bigr)
\bigl(1-p_{g,\epsilon}\bigr).
\end{equation*}
Conditional on both increments having magnitude at least $\epsilon$, their signs are independent and equally likely to be positive or negative, so the probability of opposite signs is $\frac{1}{2}$. 
Hence, the probability of relaxed disagreement is
\begin{equation*}
\frac{1}{2}
\bigl(1-p_{f,\epsilon}\bigr)
\bigl(1-p_{g,\epsilon}\bigr).
\end{equation*}
Taking its complement gives \eqref{eq: relaxed chance baseline}.
\end{proof}

The small-change probabilities $p_{f,\epsilon}, p_{g,\epsilon}$ are estimated from the observed increments by
\begin{equation*}
\widehat{p}_{f,\epsilon}
=
\frac{1}{n-1}
\sum_{i=1}^{n-1}
\mathbb{I}\bigl(|\Delta f_i|<\epsilon\bigr),
\qquad
\widehat{p}_{g,\epsilon}
=
\frac{1}{n-1}
\sum_{i=1}^{n-1}
\mathbb{I}\bigl(|\Delta g_i|<\epsilon\bigr).
\end{equation*}
The corresponding empirical relaxed chance baseline is
\begin{equation}
\label{eq: empirical relaxed chance baseline}
\widehat{C}_\epsilon
=
1-\frac{1}{2}
\bigl(1-\widehat{p}_{f,\epsilon}\bigr)
\bigl(1-\widehat{p}_{g,\epsilon}\bigr).
\end{equation}

\subsection{Additional Data Preprocessing}\label{sec:add_data_prepos}

In order to construct the relevant data for classification we consider a bootstrapped approach motivated by \cite{colby2018improving}. In particular, for each developmental stage, we generate 10 different correlation matrices using different random instances of the HGV genes across the slices at that stages. We reiterate the union approach of considering the same genes across the different slices for each sample which is analogous to the description in Section \ref{sec:data preprocessing}. The only difference here is that we introduce additional stochasticity for testing the robustness of our methods during sample generation through bootstrapping. Specifically, for each slice, we randomly select 50 top genes from 11703 highly variable genes (HVGs) identified across all slices. This is done in a manner that the different correlation matrices can serve as induced data points to understand the temporal ordering and classification results in Section \ref{subsec:classification}.

\end{document}